\documentclass[twoside,11pt]{article}

\usepackage{thm-restate}

\usepackage[preprint]{jmlr2e}

\usepackage{hyperref}       %
\usepackage{url}            %
\usepackage{booktabs}       %
\usepackage{amsfonts}       %
\usepackage{nicefrac}       %
\usepackage{microtype}      %
\usepackage{xcolor}         %
\usepackage{amsmath}

\usepackage{amsmath}
\usepackage{amssymb}
\usepackage{mathtools}
\usepackage{mathrsfs}

\usepackage{array}
\usepackage{threeparttable}

\usepackage{enumerate}

\usepackage{enumitem}

\usepackage{wrapfig}

\usepackage{dsfont}
\usepackage{comment}
\usepackage{xspace}

\usepackage{cleveref}

\usepackage{threeparttable}

\Crefname{assumption}{Assumption}{Assumptions}

\newcommand{\eg}{e.g.\xspace}
\newcommand{\as}{a.s.\xspace}

\newcommand{\wrt}{{\textit{ w.r.t.}}\xspace}

\newcommand{\ie}{{\textit{ i.e.}}\xspace}

\newcommand{\R}{\mathds{R}}
\newcommand{\Q}{\mathds{Q}}
\newcommand{\risk}{\mathcal{R}}
\newcommand{\er}{\widehat{\mathcal{R}}_S}

\newcommand{\el}{\widehat{L}_S}

\newcommand{\zcal}{\mathcal{Z}}

\newcommand{\ocal}{\mathcal{O}}

\newcommand{\Rd}{{\R^d}}

\newcommand{\Eof}[2][]{\mathds{E}_{#1} \left[ #2 \right]}

\newcommand{\Pof}[2][]{\mathds{P}_{#1} \left( #2 \right)}

\newcommand{\prob}{\mathds{P}}
\newcommand{\normof}[1]{\left\Vert #1 \right\Vert}
\newcommand{\klb}[2]{\text{\normalfont{{KL}}}\left(#1 || #2 \right)}

\newcommand{\entphi}[2]{\mathrm{Ent}_{#1}^\Phi \left(#2\right)}
\newcommand{\entphibar}[2]{\overline{\mathrm{Ent}}_{#1}^\Phi \left(#2\right)}
\newcommand{\entbar}{\overline{\mathrm{Ent}}}
\newcommand{\setof}[1]{\left\{ #1 \right\} }
\newcommand{\datadist}{\mu_z^{\otimes n}}

\newcommand{\fcal}{\mathcal{F}}

\newcommand{\der}{\text{\normalfont d}}

\newcommand{\intrd}{\int_{\Rd}}

\newcommand{\timeder}{\frac{\der }{\der t}}

\newcommand{\by}[1]{\quad\text{(#1)}}

\newcommand{\tv}{\mathrm{TV}}

\newcommand{\ecal}{\mathcal{E}}
\newcommand{\bcal}{\mathcal{B}}
\newcommand{\borel}{\mathcal{B}(\mathds{R}^d)}
\newcommand{\pcal}{\mathcal{P}}
\newcommand{\probameasures}{\mathcal{P}(\Rd)}
\newcommand{\var}{\mathrm{Var}}
\newcommand{\N}{{\mathds{N}}}
\newcommand{\ent}[2][\pi]{\mathrm{Ent}_{#1}\left( #2 \right)}
\newcommand{\entrm}{\mathrm{Ent}}
\newcommand{\cd}{\mathrm{CD}}

\newcommand{\lcal}{{\mathcal{L}}}

\newcommand{\law}{{\mathrm{Law}}}
\newcommand{\trace}{{\mathrm{Tr}}}
\newcommand{\vbar}{{\overline{v}}}
\newcommand{\Xbar}{{\overline{X}}}
\newcommand{\Ybar}{{\overline{Y}}}
\newcommand{\rhobar}{{\overline{\rho}}}
\newcommand{\Pbar}{{\overline{P}}}
\newcommand{\sigmabar}{{\overline{\sigma}}}
\newcommand{\Lrm}{\mathrm{L}}
\newcommand{\Irm}{\mathrm{I}}
\newcommand{\EofLigne}[2][]{\mathbb{E}_{#1} [ #2 ]}
\newcommand{\normofLigne}[1]{\Vert #1 \Vert}
\newcommand{\fisher}[2]{\mathscr{I}\left(#1 | #2 \right)}
\newcommand{\Wrm}{\mathrm{W}}

\newtheorem{assumption}{Assumption}

\usepackage{lastpage}
\jmlrheading{23}{2022}{1-\pageref{LastPage}}{1/21; Revised 5/22}{9/22}{21-0000}{Author One and Author Two}

\ShortHeadings{Generalization Bounds for Markov Algorithms}{Dupuis et al.}
\firstpageno{1}

\begin{document}

\title{Generalization Bounds for Markov Algorithms through Entropy Flow Computations}

\author{\name Benjamin Dupuis* \email benjamin.dupuis@inria.fr\\
 \addr INRIA - Département d’Informatique de l’Ecole Normale Supérieure\\
 \addr PSL Research University - CNRS\\
 \addr Paris, France
 \AND
 \name Maxime Haddouche
 \email maxime.haddouche@inria.fr\\
 \addr INRIA - Département d’Informatique de l’Ecole Normale Supérieure \\
 \addr PSL Research University - CNRS\\
 \addr Paris, France
 \AND
 \name George Deligiannidis \email george.deligiannidis@stats.ox.ac.uk\\
 \addr Department of Statistics, University of Oxford \\
 \addr Oxford, UK
 \AND
 \name Umut Simsekli \email umut.simsekli@inria.fr\\
 \addr INRIA - Département d’Informatique de l’Ecole Normale Supérieure\\
 \addr PSL Research University - CNRS\\
 \addr Paris, France
 \\
  \\
 {\normalfont \textbf{*}} Corresponding author.
}

\editor{My editor}

\maketitle

\begin{abstract}%
    Many learning algorithms can be represented as Markov processes, and understanding their generalization error is a central topic in learning theory.
    For specific continuous-time noisy algorithms, a prominent analysis technique relies on information-theoretic tools and the so-called ``entropy flow'' method.
    This technique is compatible with a broad range of assumptions and leverages the convergence properties of learning dynamics to produce meaningful generalization bounds, which can also be informative or extend to discrete-time settings.
    Despite their success, existing entropy flow formulations are limited to specific noise and algorithm structures (\eg, Langevin dynamics).
    In this work, we exploit new technical tools to extend its applicability to all learning algorithms whose iterative dynamics is governed by a time-homogeneous Markov process. 
    Our approach builds on a principled continuous-time approximation of Markov algorithms and introduces a new, exact entropy flow formula for such processes.
    Within this unified framework, we establish novel connections to a well-studied family of modified logarithmic Sobolev inequalities, which we use to connect the generalization error to the ergodic properties of Markov processes.
    Finally, we provide a detailed analysis of all the terms appearing in our theory and demonstrate its effectiveness by deriving new generalization bounds for several concrete algorithms.
\end{abstract}

\begin{keywords}
  Generalization Bounds, Markov Processes, Log-Sobolev Inequalities, Learning Theory, PAC-Bayes
\end{keywords}

\section{Introduction}
\label{sec:introduction}

Understanding the generalization error of machine learning algorithms remains a crucial challenge.
We model such learning problems by a tuple $(\ell, \zcal, \mu_z, \mathcal{H})$ where $\mathcal{H}$ is a parameter space ($\mathcal{H}=\Rd$ in our study), $\zcal$ is a data space endowed with a $\sigma$-algebra $\fcal$, $\mu_z$ is a data distribution, and $\ell : \mathcal{H} \times \zcal \to \R_+$ is a loss function. 
We aim to minimize the \emph{population risk},
$$\risk(w) := \int \ell(w,z) \der \mu_z(z) = \Eof[Z \sim \mu_z]{\ell(w, Z)}, \quad w \in \Rd,$$
over the parameter space $\mathcal{H} = \Rd$. Unfortunately, since $\mu_z$ is unknown, practitioners resort to the minimization of the \emph{empirical risk}, 
$$\er(w) = \frac1{n} \sum_{i=1}^n \ell(w,z_i), \quad w \in \Rd,$$
where $S := (z_1,\dots,z_n)\sim\datadist$ is a dataset sampled from $\mu_z$.

Modern machine learning systems achieve this minimization through the use of \emph{stochastic optimization algorithms}. In our study, we focus on iterative algorithms with a Markov chain structure: $X_{k+1}^S = F(X_k^S, U_{k}, S)$ where $S\in\zcal^n$ and $U_k$ denotes the internal randomness of the algorithm at step $k$, independent of $S$. This encompasses many popular algorithms, including stochastic gradient descent (SGD) with constant step size \citep{dieuleveut_bridging_2018} and stochastic gradient Langevin dynamics (SGLD). 
To assess the quality of learning beyond $S$, it is classical to provide \emph{generalization bounds} on the learned parameter $X_k^S$, \ie, upper bounds on the quantity $G_S(X_k^S)$ where $G_S(w) := \risk(w) - \er(w)$. 
To provide computable guarantees, a popular approach is to derive high probability bounds of the form\footnote{We use $\lesssim$ for informal statements omitting absolute constants or weakly relevant terms.},
\begin{align}
    \label{eq:informal-high-probability-bound}
     \Pof[S\sim\datadist]{\Eof{G_S(X_k^S)|S} \lesssim \sqrt{\frac{\texttt{Complexity} + \log(1/\zeta)}{n}}} \geq 1 - \zeta,
\end{align}
where \texttt{Complexity} translates a certain facet of the problem, for instance, the Rademacher complexity \citep{bartlett2002rademacher} or the VC dimension \citep{vapnik2000learning}.

\paragraph{Generalization bounds for iterative algorithms.} 
As Rademacher complexity or VC dimension are algorithm-independent, they are not tailored to exploit the properties of the learning algorithm.
Hence, alternative approaches such as algorithmic stability \citep{bousquet_stability_2002} emerged, which yielded generalization bounds for SGD (among others) \citep{hardt_train_2016,feldman_high_2019} exploiting, in particular, its Markovian properties \citep{zhu_uniform--time_2023-1}. Unfortunately, these bounds often rely on relatively strong assumptions (convexity, Lipschitz, and/or Lipschitz gradient) and might not be time-uniform in non-convex settings \citep{bassily_stability_2020}.  

Another prospect is that of information-theoretic bounds, which provided \emph{expected} bounds for noisy algorithms (\eg, SGLD) \citep{xu_information-theoretic_2017,negrea_information-theoretic_2020,haghifam_sharpened_2020}. These methods have also been extended to SGD by \citet{neu_information-theoretic_2021} at the expense of potential time- and dimension-dependence, and by \citet{clerico_generalisation_2023} in the case of gradient-Lipschitz losses.
Of particular interest to us are the PAC-Bayesian bounds \citep{mcallester_pac-bayesian_1999,catoni_pac-bayesian_2007} where the term \texttt{Complexity} in \Cref{eq:informal-high-probability-bound} is typically expressed as $\klb{\law(X_k^S)}{\pi}$, where $\pi$ is a data-free `prior' distribution, $\law(X_k^S)$ is the `posterior' distribution, and $\klb{\cdot}{\cdot}$ is the Kullback-Leibler divergence (KL).
More recently, \citet{harel2025temperatureneedgeneralizationlangevin} obtained time-uniform PAC-Bayesian bounds for any Markov algorithm that involves the Gibbs potential of the invariant distribution of the algorithm. However, the existence of this invariant distribution is unclear in many applications, and the Gibbs potential might be intractable in practice. 
The Markov structure of SGD was also exploited by \citet{camuto_fractal_2021,hodgkinson2022generalizationboundsusinglower} through geometric properties, at the cost of non-explicit mutual information terms.

\paragraph{The particular case of continuous-time algorithms.} In parallel to these developments, many studies have focused on `continuous-time algorithms' $(Y_t^S)_{t \geq 0}$, typically represented by stochastic differential equations (SDE), for their structure is often easier to understand. Such procedures are often seen as an approximation of discrete-time methods.
A fundamental example is SGD, approximated by Langevin processes \citep{cheng_stochastic_2020,mandt_variational_2016,anastasiou2019normalapproximationstochasticgradient,xie2021diffusiontheorydeeplearning} and heavy-tailed SDEs \citep{simsekli_tail-index_2019,gurbuzbalaban2020heavy,raj2023algorithmic2}.
Continuous-time analysis can also be used as an intermediary to reach bounds on discrete-time methods by using interpolation techniques \citep{mou_generalization_2017}, which unfortunately only hold for specific noise structures.
Moreover, the approximation of discrete-time optimizers by continuous-time dynamics remains largely disputed \citep{li_validity_2021,wojtowytsch2021stochasticgradientdescentnoise} and restricted to specific settings, such as small learning rates \citep{li_stochastic_2018} or high-dimensional limits \citep{ben-arous_high_dimensional_2022}.

\paragraph{A crucial toolbox: the entropy flow method.} Among continuous-time algorithms, the generalization error of continuous Langevin dynamics (CLD) (and its discrete-time counterpart SGLD) has been studied through many of the techniques mentioned above \citep{raginsky_non-convex_2017,farghly_time-independent_2021,dupuis_uniform_2024}, notably involving PAC-Bayesian theory and information-theoretic tools \citep{mou_generalization_2017,li_generalization_2020,futami_time-independent_2023} where the goal is to upper bound the KL divergence $\klb{\law(Y_T^S)}{\pi_t}$, with $\pi_t$ a (possibly time-dependent) prior.
These techniques often rely on the so-called \emph{entropy flow}, which informally consists in the following derivations, up to absolute constants,
\begin{align}
    \label{eq:informal-entropy-flow-sgld}
    \frac{\der}{\der t} \klb{\law(Y_t^S)}{\pi_t} =  \mathrm{A} - \fisher{\rho_t^S}{\pi}  \leq \mathrm{B}  - \gamma \klb{\law(Y_t^S)}{\pi_t},
\end{align}
where $\fisher{\rho_t^S}{\pi}$ is a relative Fisher information term \citep{hyvarinen_estimation_2005}, and $\mathrm{A}$ and $\mathrm{B}$ are quantities that are usually dependent on the stochastic gradient, $\law(Y_t^S)$, and $\pi_t$ (see \citep{mou_generalization_2017}). The above inequality is a consequence of the celebrated logarithmic Sobolev inequality (LSI) \citep{gross_logarithmic_1975-1}, which must be satisfied by $\pi_t$.
In most cases, the prior is associated with a data-independent ergodic Markov process (\eg, its invariant measure), which is the source of the inequality in \Cref{eq:informal-entropy-flow-sgld}.
These approaches reveal a fundamental trade-off on the prior between having ergodic properties and being `close enough' to the posterior distribution $\law(Y_t^S)$ to make the term $\mathrm{B}$ small, see \citep{mou_generalization_2017} for more details.
Once \Cref{eq:informal-entropy-flow-sgld} is stated, a combination of information-theoretic bounds and Grönwall's lemma \citep{Gronwall1919NoteOT} leads to informative generalization bounds, which can be shown to be time uniform under additional assumptions (\eg, Lipschitz losses).

Therefore, this technique has two main advantages compared to the stability-based and other information-theoretic approaches mentioned above: \textit{(i)} it has the ability to produce time-uniform generalization bounds, \textit{(ii)} it rely on a flexible set of assumptions, and \textit{(iii)} it explicitly exploits intrinsic convergence properties of the algorithm through the use of LSIs.
Despite recent extensions to differential privacy \citep{chourasia_differential_2022} and $\alpha$-stable noise \citep{dupuis2025renyidifferentialprivacyheavytailed,dupuis_generalization_2024}, the entropy flow method remains limited to specific noise structures (\ie, Gaussian or $\alpha$-stable), as it requires a precise description of the time evolution of the density of the Markov process (\eg, a Fokker-Planck equation), which is a core element of all the aforementioned works.

\subsection{Contributions}
\label{sec:contributions}

In this work, we propose alleviating the issues mentioned above by extending the scope of the entropy flow method to all Markov algorithms.
At the heart of our approach lies a new class of continuous-time approximations of general (discrete-time) Markov algorithms with formal guarantees.
For a given Markov algorithm $X_{k+1}^S = F(X_k^S, U_k, S)$, we define the \emph{Poissonization} of $(X_k^S)_{k\in\N}$ as the continuous-time process $Y_t^S := X^S_{N_t}$, where $N_t$ is a Poisson process \citep{lasota_chaos_1994} (see \Cref{def:poisson-process}). 
This technique has been classically used in the analysis of the convergence of Markov chains \citep{diaconis_logarithmic_1996,chen_logarithmic_2008,caputo_entropy_2024,del_moral_contraction_2003,wang_transport-information_2020} and has recently emerged in optimization theory in \cite{even_continuized_2021} to study Nesterov acceleration.

To obtain generalization bounds, we derive an exact and compact entropy flow formula for Poissonized Markov algorithms.
Rather than a simple extension, this formula relies on new technical tools. In particular, the role of the Fokker-Planck equation is now replaced by a general ``Boltzmann equation''. 
We connect our theory to a class of `modified' LSIs \citep{diaconis_logarithmic_1996} and show that this can lead to generalization bounds with improved time dependence compared to classical methods. 
This effectively connects the generalization error of Markov algorithms to ergodic theory.
Finally, we provide a rigorous and easy-to-use analysis of all the terms appearing in our bounds and apply our theory to obtain generalization bounds for several practical learning algorithms.
This shows that, similar to the classical entropy flow method, our framework is compatible with a wide range of assumptions, but is applicable to a broader class of algorithms.

Our detailed contributions are listed below.

\begin{enumerate}[noitemsep,nosep]
    \item We analyze the generalization error of Markov algorithms through their Poissonization. We first show that Poissonization is a sound continuous-time proxy and then, with the notation above, prove high-probability bounds on the \emph{Poissonized} generalization error,
    $$
        \Eof{G_S(Y_t^S) | S} = e^{-t}\sum_{k \in \N} \frac{t^k}{k!} \Eof{G_S(X_k^S) | S}.
    $$
    \item To achieve this goal, we derive a closed-form expression for the entropy flow of Poissonized algorithms. For a prior distribution $\pi$, our formula is informally given by
    $$
        \timeder \klb{\law(Y_t^S)}{\pi} = \Delta_S(t) - \ecal_\pi (t),
    $$
    where $\Delta_S(t)$ is a `distance' between two Markov kernels: {\it (i)} the kernel of the process $(X_k^S)_{k\in\N}$ and {\it (ii)} a kernel $P$ whose $\pi$ is the invariant measure; the term $\ecal_\pi (t)$ is called the \emph{Dirichlet form} and characterizes the convergence properties of $P$ and $\pi$.
    \item By making the connection with PAC-Bayesian bounds and a class of modified LSIs, we derive generic generalization bounds of the form,
    \begin{align}
        \label{eq:informal-generic-bound}
       \Pof[S \sim\datadist]{ \Eof{G_S(Y_T^S) | S} \lesssim \sqrt{\frac1{n} \int_0^T e^{-\gamma (T - t)} \Delta_S(t) \der t + \log\frac{3}{\zeta}}} \geq 1 - \zeta,
     \end{align}
     where $\gamma$ is a modified LSI constant associated with the prior distribution.
     \item We present various techniques to analyze the term $\Delta_S(t)$ in several settings for noisy and non-noisy algorithms. 
     \item Finally, we apply our framework in several examples. First, we recover Poissonized counterparts of classical generalization bounds for SGLD. Then, we use our entropy flow method to obtain new generalization bounds for both SGD and a recently proposed gradient descent method with noise injection \citep{orvieto_anticorrelated_2023}. 
\end{enumerate}

\paragraph{Organization of the paper.}
We first recall some technical background in \Cref{sec:technical-background}. Our framework is presented in \Cref{sec:poissonization-entropy-flow}, where we also derive generic generalization bounds. \Cref{sec:controlling-discrepancy} is dedicated to understanding the term $\Delta_S(t)$ appearing in \Cref{eq:informal-generic-bound} in several settings. Finally, we detail the application of our framework to concrete Markov algorithms in \Cref{sec:applications}. All omitted proofs can be found in the appendix.

\paragraph{Notation.} 
Let $\borel$ and $\probameasures$ denote the Borel sets and the Borel probability measures on $\Rd$. The absolute continuity of $\mu \in \probameasures$ \wrt $\nu \in \probameasures$ is written as $\mu \ll \nu$. The Kullback-Leibler (KL) divergence is defined as $\klb{\mu}{\nu} := \int \log (\der \mu / \der \nu) \der \mu$ when $\mu\ll \nu$ and is $+\infty$ otherwise. The relative Fisher information is $\fisher{\rho}{\pi} := \int \normofLigne{\nabla \log (\der \rho / \der \pi)}^2 \der \rho$.
We note $\tv(\mu, \nu) := \sup_{A \in \borel} |\mu(A) - \nu(A)|$ for the total variation distance.
The set of couplings between $\mu$ and $\nu$ is denoted by $\Gamma(\mu,\nu)$ and $\Wrm_p(\mu,\nu)$ is the $p$-Wassertein distance.
The Lebesgue measure on $\Rd$ is denoted by $\mathrm{Leb}(\Rd)$. For $\mu \in \probameasures$ and $f \in \Lrm^1(\mu)$, we write $\mu(f) := \int f \der \mu = \Eof[\mu]{f}$.
We denote by $\mu \otimes \nu$ the product of $\mu, \nu \in \probameasures$.
The law of a random variable $X$ is denoted $\law(X)$. Given $\nu \in \probameasures$ and a Markov kernel $K$, we note $\nu \otimes K := \law(X,Y)$, where $X \sim \pi$ and $Y | X = x \sim K(x, \cdot)$.

\section{Technical Background}
\label{sec:technical-background}

In this section, we recall some notions related to Markov kernels and define the \emph{Poissonization} operation. Finally, we give some technical background on PAC-Bayesian bounds.

\subsection{Markov kernels}
\label{sec:markov-kernels-background}

Given a time-homogeneous Markov process $(X_k)_{k\in\N}$ in $\Rd$, the \emph{Markov kernel} $P(x, A)$ describes the probability of observing $X_{k+1}$ in $A$, given that $X_k = x$. 
More precisely, it is a map $P: \Rd \times \bcal(\Rd) \to \R_+$ such that $P(x,\cdot) \in \probameasures$ for all $x \in \Rd$ and the map $x \mapsto P(x,A)$ is measurable for all $A \in \borel$.
Classically $P$ induces maps $P : \probameasures \to \probameasures$ and $P : \Lrm^\infty (\Rd) \to \Lrm^\infty (\Rd)$ defined for $\mu \in \probameasures, ~ A \in \borel$ and $f \in \Lrm^\infty (\Rd)$ by
\begin{align*}
    \mu P (A) := \Eof[X\sim\mu]{P(X,A)}, \quad Pf (x) := \Eof[Y\sim P(x,\cdot)]{f(Y)}.
\end{align*}
These operations are dual from each other, in the sense that $\int f \der (\mu P) = \int Pf \der \mu$,
whenever one of the integrals exists \citep{rudolf_explicit_2012}.
note that the operator $P : \Lrm^\infty (\Rd) \to \Lrm^\infty (\Rd)$ may be extended outside of $\Lrm^\infty (\Rd)$, when it is well-defined.

A probability measure $\pi$ is \emph{invariant} under $P$ if $\pi P = \pi$, and \emph{reversible} if
\begin{align*}
    \int f P g \der \pi = \int g P f \der \pi, \quad f,g \in \Lrm^\infty (\Rd) .
\end{align*}

\textbf{Dual operator.} Let $\nu \in \probameasures$ and assume that $P$ satisfies the following \emph{continuity preservation} condition: for all $\mu \in \probameasures$, if $\mu \ll \nu$, then $\mu P \ll \nu$. 
Let $f \in \Lrm^1(\nu)$ with $f\geq 0$ and $\int f \der \nu = 1$. This allows us to define
\begin{align}
    \label{eq:l1-adjoint-construction}
    P^\star f := \frac{\der \mu P}{\der \nu}, \quad \der\mu := f \der \nu.
\end{align}
We can easily extend this formula for all $f\in \Lrm^1(\nu)$ by linearity and by decomposing $f$ into positive and negative parts. This gives an operator $P^\star : \Lrm^1(\nu) \longrightarrow \Lrm^1(\nu)$.

\subsection{Poissonization}
\label{sec:poissonization-background}

 Let us first recall the definition of a Poisson process.
\begin{definition}[Poisson process]
    \label{def:poisson-process}
    A Poisson process $(N_t)_{t\geq 0}$ with intensity $\lambda > 0$ is a $\N$-valued Lévy process, almost-surely increasing, with $N_0=0$ and $\Pof{N_t=k} = e^{-\lambda t} {(\lambda t)^k} / {k!}$.
\end{definition}
In all the following, we fix a Poisson process $(N_t)_{t\geq 1}$ with intensity $1$, independent of all other random variables.
We refer to \citep{schilling_introduction_2016} for the definition of a Lévy process.
Given a Markov process $(X_k)_{k \in \mathds{N}}$, the \emph{Poissonized process} \citep{lasota_chaos_1994} is
\begin{align}
    \label{eq:poissonized-process}
    \boxed{
     Y_t := X_{N_t}.
     }
\end{align}
Equivalent definitions of Poissonization exist. For example, \citet{even_continuized_2021} used a stochastic integral formulation in their study of Nesterov acceleration. Poissonization can also be seen as a \emph{continuized semigroup}, \ie, if $P$ is the Markov kernel of $(X_k)_{k\in\N}$, then $(Y_t)_{t\geq 0}$ is a continuous-time Markov process with infinitesimal generator $L := P - I$. This point of view has been adopted by many authors in the Markov chains literature \citep{diaconis_logarithmic_1996,chen_logarithmic_2008,del_moral_contraction_2003,wang_transport-information_2020}.

\subsection{PAC-Bayesian bounds}
\label{sec:PAC-Bayesian-bounds-section}

Analyzing the generalization error of stochastic optimization algorithms leads to the consideration of randomized predictors, which have been classically studied by PAC-Bayesian theory (see \citep{alquier2024user} for an introduction). More precisely, for $S\in\zcal^n$, we define the posterior $\rho_S \in \probameasures$ to be the distribution\footnote{More precisely, $(\rho_S)_{S\in\zcal^n}$ is a Markov kernel on $\zcal^n \times \Rd$.} of the random output of the algorithm given $S$. Given a data-independent prior distribution $\pi \in \probameasures$, a wide variety of PAC-Bayesian bounds have related the generalization error to the KL divergence $\klb{\rho_S}{\pi}$ \citep{mcallester_pac-bayesian_1999,mcallester_pac-bayesian_2003,maurer_note_2004,catoni_pac-bayesian_2007,germain_pac-bayesian_2009,seeger_pac-bayesian_2002} (to name a few). 

In our study, we say that $\ell$ is $\Sigma^2$-subgaussian when for all $\lambda > 0$ and all $w \in \Rd$, we have
\begin{align}
    \label{eq:subgaussian-definition}
    \Eof[Z \sim \mu_z]{e^{\lambda (\ell(w,Z) - \Eof[Z'\sim \mu_z]{\ell(w,Z')})}} \leq e^{\frac{\lambda^2 \Sigma^2}{2}}
\end{align}
To utilize this assumption, we use the PAC-Bayesian bound proposed by \citet{dupuis_generalization_2024}, which is similar to that of \citet{mcallester_pac-bayesian_2003,germain_pac-bayesian_2009} for bounded losses, yielding high-probability bounds that are uniform over the posterior distribution.

\begin{theorem}
\label{thm:subgaussian-pac-bayes}
    We consider a mapping $ S \mapsto \mathscr{P}_S \subseteq \setof{\rho\in\pcal(\Rd),~\klb{\rho}{\pi} < +\infty}$, with $S\in\zcal^n$.
    Assume that $\ell$ is $\Sigma^2$-subgaussian.
    Then, we have
    \begin{align*}
        \Pof[S\sim\datadist]{\forall \rho \in \mathscr{P}_S,~\Eof[W \sim \rho]{G_S(W)} \leq 2\Sigma \sqrt{\frac{\klb{\rho}{\pi} + \log(3/\zeta)}{n}}} \geq 1 - \zeta,
    \end{align*}
    as soon as the set inside the probability $\prob_{S \sim \datadist}$ is $\fcal^{\otimes n}$-measurable.
\end{theorem}

\section{Entropy Flow and Generalization of Poissonized Markov Algorithms}
\label{sec:poissonization-entropy-flow}

In this section, we present our entropy flow computation for Poissonized Markov algorithms.
This finding, presented in \Cref{thm:entropy-flow-weak-regularity} underlies all our main results.
We first discuss the Poissonization of Markov algorithms in \Cref{sec:poissonization-notations-and-convergence} and our assumptions in \Cref{sec:assumptions-entropy-flow}.
In \Cref{sec:entropy-flow-subsection}, we present our general entropy flow derivations.
Finally, \Cref{sec:poisson-sobolev} makes the link with a class of modified logarithmic Sobolev inequalities.

\subsection{Poissonized Markov algorithms}
\label{sec:poissonization-notations-and-convergence}

In what follows, we fix a data-dependent time-homogeneous Markov process $(X_k^S)_{k \in \N}$ in $\Rd$, where $S \in \zcal^n$. 
Several examples (such as SGD and SGLD) will be discussed in \Cref{sec:applications}.
We denote by $P_S$ the Markov kernel of $(X_k^S)_{k \in \N}$, for $S \in \zcal^n$.
We assume that the dynamics is initialized from a Borel probability measure $\mu_0$, \ie, $X_0^S \sim \mu_0$. 

With these notations, our theory aims to obtain generalization bounds for the Poissonized process $(Y_t^S := X_{N_t}^S)_{t\geq 0}$, as defined in \Cref{sec:poissonization-background}. We will denote by $\rho_t^S := \law(Y_t^S)$ the probability distribution of $Y_t^S$, with $t \geq 0$ and $S\in\zcal^n$.
Our goal is to apply the PAC-Bayesian bounds of \Cref{thm:subgaussian-pac-bayes} using $\rho_t^S$ as a posterior distribution. To better exploit the Markovian structure of the problem, we define the prior distribution as follows.
\begin{definition}[Prior distribution]
    \label{def:prior-distribution}
    We call a prior distribution the invariant measure $\pi \in \pcal(\Rd)$ of a data-independent Markov kernel $P$, called the prior Markov kernel.
\end{definition}
In the following, we systematically assume that the initial distribution $\mu_0$ is absolutely continuous with respect to the chosen prior $\pi$, \ie, $\mu_0 \ll \pi$.

\paragraph{Depoissonization.}
The generalization error of the Poissonized process is expressed as
\begin{align}
    \label{eq:poissonized-generalization-formula}
    \Eof{G_S(Y_t^S) | S} = e^{-t} \sum_{k\in\N} \frac{t^k}{k!} \Eof{G_S(X_k^S) | S} \by{$\datadist$-almost surely} .
\end{align}
We show in \Cref{lemma:integrability_of_G_S} that this quantity is well-defined under our assumptions.
Whether this provides pertinent information about the non-Poissonized iterates is a legitimate concern. 
A discrete-time Markov chain and its Poissonized version are known to have comparable properties \citep{jacquet_analytical_1998,levin_markov_2017,caputo_entropy_2024}. That being said, reconstructing the depoissonized distribution of $X_k$ from $Y_t$ is a long-standing technical problem \citep{teugels_note_1972,vallee_depoissonisation_2018,jacquet_analytical_1998}. 

Beyond these classical depoissonization results, we show below that Poissonization provides a sound approximation of the generalization error of convergent Markov algorithms.

\begin{restatable}{theorem}{ThmDepoissonizationInvariant}
    \label{thm:poissonization-invariant-measure-convergence}
    Assume that $|\ell| \leq B < \infty$ and that $\tv(\mu_0 P_S^k, \mu^S) \to 0$ for some $\mu^S \in \probameasures$, \as for $S$.
    Then, \as, $\Eof{|G_S(X_k^S) - G_S(Y_k^S)|~| S} \to  0 $.
    If moreover there exists $C_S>0$ and $a_S \in (0,1)$ such that, \as, $\tv (\mu_0 P_S^k,\mu^S) \leq C_S a_S^k$, then, \as, $$\Eof{|G_S(X_k^S) - G_S(Y_k^S)| ~| S} \leq 4B C_S e^{-(1 - a_S)k}.$$ 
    If $\ell$ is $L$-Lipschitz, then we can replace $\tv$ by the Wasserstein distance $\Wrm_1$ (and $2B$ by $L$).
\end{restatable}

The conditions $\tv (\mu_k^S,\mu^S) \leq C_S a_S^k$ (resp. $\Wrm_1 (\mu_k^S,\mu^S) \leq C_S a_S^k$) used above are related to \emph{geometric} (resp. \emph{Wasserstein}) ergodicity \citep{meyn_markov_1993,gallegos_herrada_equivalences_2023} that has been widely studied in the context of convergence of Markov chains \citep{rudolf_perturbation_2017}. 
Note that our condition is weaker than geometric ergodicity: we do not assume the uniqueness of the invariant distribution. 
These concepts have received increasing attention in learning theory due to their connections with SGD \citep{zhu_uniform--time_2023-1} and differential privacy \citep{simsekli_differential_2024}. Although our study is not specific to this exact setting, \Cref{thm:poissonization-invariant-measure-convergence} provides a \emph{sufficient} condition to ensure that Poissonization is a relevant continuous-time approximation of discrete dynamics.

\subsection{Main assumptions and notations}
\label{sec:assumptions-entropy-flow}

Whenever it makes sense, we denote the Radon-Nykodym derivative between the posterior distribution and the prior distribution as
\begin{align}
    \label{eq:vt-definition}
    v_t := \frac{\der \rho_t^S}{\der \pi}, \quad t\geq 0,~ S\in\zcal^n.
\end{align}
Our theory relies on a simple condition, which simply ensures that the posterior and prior Markov kernels ($P_S$ and $P$) are of similar nature, in the sense that $v_t$ cannot explode too fast. This is made precise by the following assumption.

\begin{assumption}[Compatibility condition]
    \label{ass:compatibility-condition}
    Let $\mu_0$ be an initial distribution such that $\klb{\mu_0}{\pi} < +\infty$ and $v_0 := \der \mu_0 / \der \pi$ is bounded away from $0$. 
    We assume that the map $S \mapsto \klb{\rho_t^S}{\pi}$ is $\fcal^{\otimes n}$-measurable.
    Moreover, we assume the following, for all $S \in \zcal^n$
    \begin{enumerate}[label=\textbf{(H.\arabic*)},leftmargin=3\parindent]
        \item \label{ass:continuity-preservation} \textbf{(continuity preservation)} For all $\mu \in \pcal(\Rd)$, if $\mu \ll \pi$, then $\mu P_S \ll \pi$.
        \item \label{ass:bounded-entropy} For all $t \geq 0$, we have $\Phi (P_S^\star v_t) \in \Lrm^1(\pi)$, where $\Phi(x) := x\log(x)$.
    \end{enumerate}
\end{assumption}
The assumption that $\mu_0 \ll \pi$ and the condition \ref{ass:continuity-preservation} are natural in our context, since it ensures that $\rho_t^S \ll \pi$.
We expect it to be mild in practice and, in particular, it holds in the following examples: \textrm{(i)} if $\pi$ is fully supported on a discrete space, \textrm{(ii)} if $P_S$ corresponds to SGD with Gaussian noise addition (\ie, SGLD), \textrm{(iii)} if $\pi \ll \mathrm{Leb}(\Rd)$ and $P_S$ corresponds to SGD on a gradient-Lipshitz loss with small enough learning rate (see \citep[Theorem 3]{clerico_generalisation_2023}). 
Moreover, by \Cref{ass:continuity-preservation}, we can define the adjoint $P_S^\star: \Lrm^1(\pi) \to \Lrm^1(\pi)$ of $P_S$ with respect to $\pi$, in the sense of \Cref{sec:markov-kernels-background}. 

It can be seen from the proof of \Cref{thm:entropy-flow-weak-regularity} that $\klb{\rho_t^S}{\pi}$ is finite under the condition \Cref{ass:bounded-entropy}
(note that this does not imply that $\klb{\rho_t^S}{\pi}$ is small, which is the purpose of the presented theory).
The measurability assumption on $S \mapsto \klb{\rho_t^S}{\pi}$ is mild, as it holds as soon as $\zcal$ is countable, for instance.
We also expect \ref{ass:bounded-entropy} to be mild. 
In particular, we prove that it holds for all the examples discussed in \Cref{sec:sgld,sec:sgd-perturbed-last-iterate,sec:sgd-noise-injection}.
To further illustrate this, we provide a functional characterization of this assumption in \Cref{lemma:continuous-operator-case}, showing that this condition is satisfied as soon as $P_S$ defines a continuous operator $\Lrm^2(\pi) \to \Lrm^2(\pi)$.

\subsection{The entropy flow for Markov algorithms}
\label{sec:entropy-flow-subsection}

In this section, we present our entropy flow formula for Poissonized Markov algorithms, which is the backbone of our contributions.
We denote $\Phi(x) := x \log (x)$ (with $\Phi(0) := 0$).

To compute the derivative of the map $t \mapsto \klb{\rho_t^S}{\pi}$, we first need to understand the time-derivative of $v_t$, defined in \Cref{eq:vt-definition}. The evolution of the probability density of a Poissonized process is well-understood and satisfies a so-called Boltzmann equation \citep{lasota_chaos_1994}. The following lemma is an adpatation of \citep[Equation 8.3.7]{lasota_chaos_1994} to our setup and notation (the main difference being the use of $\pi$ as a reference measure). We provide a rigorous proof of this lemma in \Cref{sec:boltzmann-and-regularity-proof}.

\begin{restatable}[Boltzmann equation]{lemma}{lemmaBoltzmannEquation}
    \label{lemma:boltzmann-with-regularity}
    Assume that the continuity preservation condition  (\Cref{ass:continuity-preservation}) holds, then $v_t$ is solution of the following Boltzmann equation, for all $t>0$,
    \begin{align*}
        \frac{\partial v_t}{\partial t} = \left( P_S^\star - I \right) v_t.
    \end{align*}
\end{restatable}
In our paper, the Boltzmann equation plays an analogous role to the Fokker-Planck equations in the analysis of the Langevin algorithm \citep{mou_generalization_2017}.
Equipped with this lemma, we compute in the next theorem the entropy flow for Poissonized algorithms. 

\begin{restatable}[Entropy flow]{theorem}{thmEntropyFlow}
    \label{thm:entropy-flow-weak-regularity}
    Assume that \Cref{ass:compatibility-condition} holds. We have that
    \begin{align*}
        \timeder \klb{\rho_t^S}{\pi} = \int (P_S - P) (\Phi' \circ v_t) v_t \der \pi - \ecal_{\pi,P} (\Phi'\circ v_t, v_t),\quad t > 0.
    \end{align*}
    where $\ecal_{\pi,P} (f,g) := - \int g (P - I) f \der \pi$ is the \emph{Dirichlet form} associated with the prior process.
\end{restatable}

In the sequel, we use the following notation for the first term of the entropy flow.
\begin{definition}[Expansion term]
    \label{def:expansion-term}
    We call \emph{expansion term} the first term on the right-hand side of \Cref{thm:entropy-flow-weak-regularity}. Whenever it is defined, we will denote it as (note that $\Phi'= 1 + \log$),
        $$
        \Delta_{P,P_S} (f) := \int (P_S - P) (\log f) f \der \pi.
        $$
\end{definition}

 \Cref{thm:entropy-flow-weak-regularity} expresses the entropy flow as the difference between the \emph{expansion} term and the \emph{Dirichlet form}.
 The expansion term represents the discrepancy between the posterior dynamics $P_S$ and the prior one $P$, and \Cref{sec:controlling-discrepancy} is dedicated to its analysis.
 The Dirichlet form is a classical object in the study of Markov processes \citep{bakry_analysis_2014,bottcher_levy_2013}, where it plays a crucial role in the construction of continuous-time semigroups and in the formulation of functional inequalities satisfied by their invariant distribution. 

 By the invariance $\pi$ under $P$, the Dirichlet term $\ecal(\log v_t, v_t)$ can be represented as
     \begin{align}
        \label{eq:dirichlet-bregman-representation}
         \ecal_{\pi, P} (\log v_t, v_t)
         &= \iint \mathrm{D}_{\Phi} (v_t(x), v_t(y)) \der (\pi \otimes P) (x,y) \geq 0,
     \end{align}
 where $\mathrm{D}_\Phi(a,b) := \Phi(a) - \Phi(b) - \Phi'(b) (a - b)$ denotes the Bregman divergence \citep{amari2016information}.
 By the convexity of $\Phi$, we see that $\ecal_{\pi, P} (\log v_t, v_t) \geq 0$. This representation as an integrated Bregman divergence is analogous to the ``Bregman integral'' appearing in \citep{dupuis_generalization_2024} in their study of heavy-tailed SDEs.

\begin{remark}
    In \Cref{thm:entropy-flow-weak-regularity}, the prior $\pi$ is the invariant measure of a Markov kernel $P$. Alternatively, one can use time-dependent priors $\pi_t$ by defining $\pi_t$ as the law of a Poissonized data-independent Markov process $(X_k)_{k\in\N}$. Then \Cref{eq:dirichlet-bregman-representation} still holds, \ie,
    \begin{align*}
        \timeder \klb{\rho_t^S}{\pi_t} = \Delta_{P,P_S}(v_t) - \iint \mathrm{D}_{\Phi} (v_t(x), v_t(y)) \der (\pi_t \otimes P) (x,y), \quad v_t := \frac{\der \rho_t^S}{\der \pi_t}.
    \end{align*}
\end{remark}

\begin{remark}
    By adapting the regularity assumptions, our theory holds formally for more general convex functions $\Phi$, hence, yielding $\Phi$-entropy flows, which we leave for future work.
\end{remark}

We conclude this subsection with the following theorem, which is the first generic generalization bound that we can deduce from our theory.

\begin{theorem}
    \label{thm:generalization-without-lsi}
    Assume that \Cref{ass:compatibility-condition} holds and that the loss is $\Sigma^2$-subgaussian. Then we have, with probability at least $1 - \zeta$ over $S\sim\datadist$ that for all $T>0$,
    $$
         \Eof{G_S(Y_T^S) | S } \leq \frac{2\Sigma}{\sqrt{n}} \left\{ \int_{0}^T \Delta_{P_S, P}(v_t) \der t +  \klb{\mu_0}{\pi} + \log(3 / \zeta) \right\}^{1 /2}.
    $$
\end{theorem}

\begin{proof}
    By \Cref{thm:subgaussian-pac-bayes}, with probability at least $1-\zeta$ over $S \sim \datadist$ for all $T>0$,
    \begin{align}
        \label{eq:pac-bayes-application-in-proof}
        \Eof{G_S(Y_T^S) | S } \leq 2\Sigma \sqrt{\frac{\klb{\rho_T^S}{\pi} + \log(3 / \zeta)}{n}}.
    \end{align}
    We report in \Cref{sec:rigorous-pac-bayes -application} a justification of \Cref{eq:pac-bayes-application-in-proof}, taking care of measure-theoretic considerations, which we omit here for simplicity. In particular, under \Cref{ass:compatibility-condition}, $\klb{\rho_T^S}{\pi}<+\infty$.
    Fix $S \in \zcal^n$. By \Cref{thm:entropy-flow-weak-regularity}, we have that
    \begin{align}
        \label{eq:simple-proof-step-1}
        \timeder \klb{\rho_t^S}{\pi} = \Delta_{P,P_S}(v_t) - \ecal_{\pi, P}(\log v_t, v_t), \quad t > 0.
    \end{align}
    By \Cref{eq:dirichlet-bregman-representation} and \Cref{lemma:continuity-of-entropy} (proven in the appendix), we deduce that
    \begin{align*}
        \klb{\rho_T^S}{\pi} \leq \klb{\mu_0}{\pi} + \int_0^T  \Delta_{P,P_S}(v_t) \der t.
    \end{align*}
    This concludes the proof.
\end{proof}
\Cref{thm:generalization-without-lsi} shows that obtaining generalization bounds for the Poissonized process reduces to an upper bound on the expansion term $\Delta_{P,P_S}(v_t)$. In \Cref{sec:controlling-discrepancy}, we present several methods to control this term. Before presenting these aspects, we observe that the generalization bound above grows linearly with time even if $\Delta_{P,P_S}(v_t)$ is uniformly bounded.
This suggests that the inequality $\ecal_{\pi,P}(\log v_t, v_t) \geq 0$ can be improved, as explained in the next subsection.

\subsection{Generalization bounds under modified log-Sobolev inequalities}
\label{sec:poisson-sobolev}

We first recall the notion of (classical) LSI (see \citealp{bakry_analysis_2014,chafai_logarithmic_2017} for modern introductions). Let $\nu \in \probameasures$, we associate to $\nu$ the \emph{entropy} functional,
\begin{align}
    \label{eq:entropy-functional-definition}
    \ent[\nu]{f} = \entphi{\nu}{f} := \Eof[\nu]{\Phi(f)} - \Phi(\Eof[\nu]{f}), \text{ with }\Phi(x) = x\log(x).
\end{align}
The entropy functional generalizes the KL divergence, in the sense that $\ent[\nu]{\der \mu / \der \nu} = \klb{\mu}{\nu}$ as soon as $\mu \ll \nu$. 
A probability measure $\nu$ is said to satisfy the $\beta$-LSI if for all positive $f \in \Lrm^1(\nu)\cap \mathcal{C}^1(\Rd)$ we have the inequality,
\begin{align*}
    \ent[\nu]{f} \leq \frac{1}{2\beta} \int \frac{\normof{\nabla f}^2}{f} \der \nu = \frac{1}{2\beta}\ecal^{\mathrm{diff}} (\log f, f),\quad \ecal^{\mathrm{diff}} := \int \langle \nabla f, \nabla g \rangle \der \nu.
\end{align*}

For example, $\mathcal{N}(0,\sigma^2 I_d)$ satisfies a $1/\sigma^2$-LSI \citep{gross_logarithmic_1975-1}.
Such inequalities have been extensively studied for their links with the convergence of Markov processes \citep{bakry_analysis_2014}. In learning theory, they have been used for the generalization analysis of noisy algorithms \citep{mou_generalization_2017} and differential privacy \citep{chourasia_differential_2022,dupuis2025renyidifferentialprivacyheavytailed}.
In the above inequality, $\ecal^{\mathrm{diff}}$ is called the Dirichlet form associated with the Langevin process whose invariant distribution is $\nu$.
This reveals a more general structure and, in fact, variants of LSIs are naturally associated with more general Dirichlet forms \citep{bakry_analysis_2014}.
In our study, we use the following notion of \emph{modified} LSI.

\begin{definition}[Modified LSI]
     \label{def:dirichlet-modified-lsi}
    We say that $(\pi, P)$ satisfies a modified $\gamma$-LSI if for any positive $f$ such that $f \log f \in \Lrm^1(\pi)$, we have
    \begin{align}
        \label{eq:dirichlet-modified-lsi}
        \ecal_{\pi,P}(\log(f),f) \geq \gamma \ent{f}.
    \end{align}
     To make this definition perfectly rigorous, we show in \Cref{lemma:dirichlet-makes-sense} that $\ecal_{\pi,P}(\log(f),f)$ always has meaning in $\R \cup \setof{+\infty}$ under our assumption and that $\ecal_{\pi,P}(\log(f),f) \geq 0$.
\end{definition}

Such inequalities were introduced by \citet{diaconis_logarithmic_1996} and extensively studied  to analyze the convergence rate of Markov chains \citep{bobkov_modified_2006,bobkov_modified_1998,goel_modified_2004,wu_new_2000,ane_logarithmic_2000}.
The term "modified" LSI refers to the use of the term $\ecal(\log (f), f)$ instead of the more classically used $\ecal(\sqrt{f}, \sqrt{f})$ and this terminology has been adopted by many of the aforementioned authors. Both definitions are equivalent in the Gaussian case, and in general, modified LSI is weaker than LSI.

In the next theorem, we exploit modified LSIs to reach a novel generalization bound.

\begin{restatable}[Generalization bound under LSI]{theorem}{thmGenericGeneralization}
\label{thm:generalization-under-modified-lsi}
    In the setting of \Cref{thm:generalization-without-lsi}, assume that $(\pi, P)$ satisfies a modified $\gamma$-LSI. Then, with probability at least $1 - \zeta$, for all $T>0$,
    \begin{align*}
        \Eof{G_S(Y_t^S) | S} \leq \frac{2\Sigma}{\sqrt{n}} \left\{  \int_0^T e^{-\gamma(T - t)}  \Delta_{P,P_S}(v_t)  \der t + e^{-\gamma T} \klb{\mu_0}{\pi} + \log \left( \frac{3}{\zeta} \right) \right\}^{1 / 2} .
    \end{align*}
\end{restatable}

\begin{proof}
    The proof follows the same lines as the proof of \Cref{thm:generalization-without-lsi} until \Cref{eq:simple-proof-step-1} (included). Then, by the modified logarithmic Sobolev inequality, we have that for all $t > 0$,
    \begin{align*}
        \timeder \klb{\rho_t^S}{\pi} \leq \Delta_{P,P_S}(v_t) - \gamma \klb{\rho_t^S}{\pi}.
    \end{align*}
    By \Cref{lemma:continuity-of-entropy} in the appendix and Grönwall's lemma, we conclude that
    \begin{align*}
        \klb{\rho_T^S}{\pi} \leq  e^{-\gamma T} \klb{p_0}{\pi} + \int_{0}^T e^{-\gamma (T - t) } \Delta_{P,P_S}(v_t) \der t.
    \end{align*}
    The result follows.
\end{proof}
This shows that priors satisfying modified LSIs induce an exponential decay $e^{-\gamma(T - t)}$ in the bound, improving over \Cref{thm:generalization-without-lsi}. This is analogous to the role of LSIs for Gaussian \citep{mou_generalization_2017} or heavy-tailed \citep{dupuis_generalization_2024} SDEs. 
\Cref{thm:generalization-under-modified-lsi} reduces the problem of controlling time-uniformly $\Eof {G_S(Y_t^S) | S}$ to upper-bounding $\Delta_{P,P_S}(v_t)$. 

To further characterize the modified LSI constant, we define below the entropy contraction coefficient, as suggested by \citet{Raginsky2014StrongDP} (and references therein).

\begin{definition}[Contraction coefficient]
    \label{def:contraction-coeffcient-KL}
    Let $(\pi, P)$ be as in \Cref{def:prior-distribution}. 
    We define $\mathscr{F} := \setof{\nu \in \probameasures,~ \klb{\nu}{\pi} < +\infty}$. We define the entropy contraction coefficient as
    \begin{align*}
        \delta(\pi, P) := \sup_{\nu \in \mathscr{F}} \frac{\klb{\nu}{\pi} - \klb{\nu P}{\pi}}{\klb{\nu}{\pi}}.
    \end{align*}
    By invariance of $\pi$ under $P$ and the data processing inequality, we have $ \delta(\pi, P) \geq 0$.
\end{definition}
There exist many coefficients to measure the contraction properties of Markov chains, such as Dobrushin's coefficient \citep{dobrushin_central_1956}. We invite the reader to consult \citep{caputo_entropy_2024} for a recent exposition of these notions.
Many authors noted that, under structural assumptions on $P$, $\delta(\pi, P)$ is related to the optimal modified LSI constant appearing in \Cref{def:dirichlet-modified-lsi} \citep{Raginsky2014StrongDP,blanca_entropy_2022,del_moral_contraction_2003}. 
In our case, we will make use of the following lemma, for which we provide a rigorous proof in \Cref{sec:proofs-poisson-sobolev-section}.
\begin{restatable}{lemma}{lemmaContractionCoeffLSI}
    \label{lemma:lsi-transfer-general-lemma}
    Let $(\pi, P)$ be as in \Cref{def:prior-distribution}. 
    Then the Dirichlet form $\ecal_{\pi,P}$ satisfies the following modified log-Sobolev inequality, for all non-negative $f$ such that $f\log (f) \in \Lrm^1(\pi)$,
    \begin{align*}
       \ecal_{\pi,P} (\Phi'(f), f) \geq \delta(\pi, P)\ent{f} = \ent{f} - \ent{P f}.
    \end{align*}
\end{restatable}
The following example is an application of \Cref{lemma:lsi-transfer-general-lemma}, showing that discrete Ornstein Uhlenbeck processes satisfy LSIs. See \Cref{sec:proofs-poisson-sobolev-section} for a proof.

\begin{example}
    \label{ex:ornstein-uhlenbeck-example}
    Consider the discrete Ornstein-Uhlenbeck process defined by $X_{k+1} = (1 - \gamma) X_k + \sigma \xi_k$, with $\sigma > 0$, $\gamma \in (0,1)$, and $(\xi_k)_{k \geq 0} \sim \mathcal{N}(0, I_d)^{\otimes \infty}$. Then, the invariant distribution of this process is $\pi := \mathcal{N}(0,\sigma_\pi^2)$ with $\sigma_\pi := \sigma / \sqrt{1 - (1 - \gamma^2)}$. Then $\pi$ is reversible, and we have $\delta(\pi, P) \geq 2 \gamma - \gamma^2 \geq \gamma$. Thus, by \Cref{lemma:lsi-transfer-general-lemma}, $(P,\pi)$ satisfies a modified LSI with constant $\gamma$.
\end{example}
It is clear from the proof in \Cref{sec:proofs-poisson-sobolev-section} that this lemma can be generalized to obtain modified LSIs for Markov kernels associated with any diffusion process whose reversible measure satisfies a (classical) LSI. Hence, this \Cref{lemma:lsi-transfer-general-lemma} may be seen as a transfer result between classical and modified LSIs.

\section{Controlling the Discrepancy Between Markov Kernels}
\label{sec:controlling-discrepancy}

In \Cref{sec:poisson-sobolev}, we have controlled the Dirichlet form that appears in \Cref{thm:entropy-flow-weak-regularity} through modified LSIs and proved that such inequalities were satisfied by diffusive priors. To apply our theory to practical algorithms, it remains to analyze the expansion term $\Delta_{P,P_S}(v_t)$.
We present two methods to achieve this, depending on the structure of the algorithm. First, we consider noisy algorithms in \Cref{sec:noisy-algorithms} and then turn to non-noisy algorithms in \Cref{sec:singular-algorithms}.

\subsection{Generalization bounds for noisy algorithms}
\label{sec:noisy-algorithms}

The generalization error of noisy iterative algorithms has been extensively studied \citep{haghifam_sharpened_2020,xu_information-theoretic_2017,negrea_information-theoretic_2020,bu_tightening_2020-1}.
A fundamental example is noisy SGD, for which we introduce some notation in the following.

\begin{example}[Noisy SGD]
    \label{ex:noisy-sgd-noisy-algorithm}
    We define noisy SGD by the recursion
    $$ X_{k+1}^S = (1 - \lambda \eta) X_k^S - \eta \widehat{g}_S(X_k^S, U_k) + \xi_k,$$
    with learning rate $\eta > 0$, regularization coefficient $\lambda \geq 0$ (potentially $0$), stochastic gradient $\widehat{g}_S(x, U_k)$, and added noise $\xi_k$, where $(\xi_k)_{k\geq 0}$ are independently and identically distributed (i.i.d.).
    $(U_k)_{k\in \N}$ represent the randomness of the batch indices and are i.i.d. and independent of $(\xi_k)_{k \geq 0}$. We denote $\mu_U$ the law of $U_k$. SGLD corresponds to $\xi_k \sim \mathcal{N}(0, \sigma^2 I_d)$.
\end{example}
With our notation, we characterize a noisy algorithm by the following assumption.
\begin{assumption}[Noisy algorithms]
    \label{ass:noisy-conditions}
    For all $x \in \Rd$ and $S \in \zcal^n$, we have $\delta_x P_S \ll \delta_x P$.
\end{assumption}
Many of our applications use prior kernels such that $\delta_x P \ll \mathrm{Leb}(\Rd)$. Therefore, \Cref{ass:noisy-conditions} describes algorithms that have densities with respect to the Lebesgue measure.
Thus, a typical example of noisy algorithm is \Cref{ex:noisy-sgd-noisy-algorithm} with additional noise $(\xi_k)$ that is absolutely continuous with respect to the Lebesgue measure.

The next proposition presents a generic upper bound on the expansion term for noisy algorithms. The proof can be found in \Cref{sec:proofs-noisy-algorithms}.

\begin{restatable}{proposition}{propExpansionNoisyAlgo}
    \label{prop:expansion-term-general-bound-noisy}
    Assume that \Cref{ass:compatibility-condition,ass:noisy-conditions} hold. Then, we have
    \begin{align}
        \label{eq:expansion-term-noisy-bound}
        \Delta_{P,P_S}(v_t) \leq  \int {\inf_{q > 0} \left(\frac1{q} \klb{\delta_x P_S}{\delta_x P} + q \Irm(q, x, t) \right)} \der \rho_t^S(x),
    \end{align}
    where, using the normalized entropy notation $\overline{\entrm}_\nu (f) := \nu(f)^{-1} \entrm_\nu(f)$, we define
    \begin{align}
        \label{eq:expansion-term-convergence-to-variance}
        \Irm(q, x, t) := \frac1{q} \int_0^q \overline{\entrm}_{\delta_x P}(v_t^\epsilon) \frac{\der \epsilon}{\epsilon^2} \underset{q \to 0}{\longrightarrow}  \frac{1}{2} \var_{\delta_x P} (\log v_t).
    \end{align}
\end{restatable}

\Cref{prop:expansion-term-general-bound-noisy} can be seen as a global-to-local bound, as it transfers the estimation of the ``global'' divergence $\klb{\rho_t^S}{\pi}$ to the ``local'' divergence $\klb{\delta_x P_S}{\delta_x P}$, which in general has a simpler structure. This term quantifies the impact of an individual iteration on the generalization error.
\Cref{eq:expansion-term-noisy-bound} contains a free parameter $q > 0$. This is analogous to the use of Young's inequality in entropy flow derivations along Langevin dynamics \citep[eq. (23)]{mou_generalization_2017} or Lévy-driven SDEs \citep[Cor. 4.2]{dupuis_generalization_2024}. \Cref{prop:expansion-term-general-bound-noisy} is a formal generalization of this proof technique for general Markov processes.

We see in \Cref{eq:expansion-term-convergence-to-variance} that the term $\Irm(q,x,t)$ is of order $1$ in $q$ as $q \to 0^+$ (if $\log(v_t) \in \Lrm^2(\delta_x P)$). Therefore, the local KL divergence dominates the bound as $q \to 0^+$. In \Cref{sec:applications} we show in specific examples how the term $\Irm(q,x,t)$ can be explicitly bounded or even avoided. 
In the next example, we express the local KL divergence for SGLD.

\begin{example}[SGLD]
    \label{ex:sgld-expansion-term-example}
    Consider SGLD as in \Cref{ex:noisy-sgd-noisy-algorithm}, with $\lambda > 0$. Let the prior Markov kernel be $P(x, \cdot) = \mathcal{N}((1 - \lambda \eta)x, \sigma^2 I_d)$. Then, by the joint convexity of the KL divergence \citep{van_erven_renyi_2014}, we have, for $x \in \Rd$, $\klb{\delta_x P_S}{\delta_x P} \leq \frac{\eta^2}{2\sigma^2} \EofLigne{\normofLigne{\widehat{g}_S(x, U)}^2 | S}$.
\end{example}
This example will be used in \Cref{sec:applications} to obtain Poissonized generalization bounds for SGLD.
The following example gives an intuitive interpretation of the variance term appearing in \Cref{eq:expansion-term-convergence-to-variance}, in the case of Gibbs distributions.
\begin{example}[Gibbs distributions]
    \label{ex:gibs-distribution-example}
    Assume that $\der \rho_t^S \propto e^{-V^S_t (x)} \der x$ and $\der \pi \propto e^{-V} \der x$, where $V^S_t, V : \Rd \to \R$ are Gibbs potentials. Then we have
    \begin{align*}
        \var_{\delta_x P} (\log v_t) =  \var_{\delta_x P} \left( V^S_t - V  \right).
    \end{align*}
    Consider the SGLD example of \Cref{ex:noisy-sgd-noisy-algorithm} with $\lambda > 0$ and choose $P(x, \cdot) = \mathcal{N}(\gamma x, \sigma^2 I_d)$ with $\gamma = 1 - \lambda \eta$, then we expect the above quantity to be close to $\var_{\delta_x P} (\er)$ as $t \to \infty$ \citep{raginsky_non-convex_2017}. In this case, we have $\var_{\delta_x P} (\er) = \EofLigne{ ( \er (\gamma x + \xi) - \EofLigne{\er(\gamma x + \xi)} )^2}$ with $ \xi \sim \mathcal{N}(0,\sigma^2 I_d)$.
    This term is similar in nature to the ``expected sharpness'' appearing in \citep[Eq. (8)]{neyshabur_exploring_2017} and the ``local value sensitivity'' used by \citet{neu_information-theoretic_2021}. Therefore, we can interpret this variance term as a measure of the sharpness of the local minimum to which the algorithm has converged.
\end{example}

\subsection{Analysis of non-noisy algorithms}
\label{sec:singular-algorithms}

Unfortunately, various popular procedures, starting with SGD, cannot satisfy \Cref{ass:noisy-conditions}.
In this subsection, we extend our framework beyond noisy algorithms. 

To avoid the condition that $\delta_x P_S \ll \delta_x P$, we perform an expansion of $P_S$ ``around $P$''. This is made precise by the following proposition, which is inspired by \citep[Proposition $1$]{polyanskiy_wasserstein_2016} and \citep[Lemma 3.5]{raginsky_non-convex_2017}.

\begin{restatable}{proposition}{propScoreWasserstein}
    \label{prop:first-term-bound-f-regular}
    Assume that, for all $t\in [0,T]$, there exist constants $c_1,c_2>0$ such that $v_t$ satisfies the linear growth condition: $\forall x \in \Rd,~\normof{\nabla \log v_t(x)} \leq c_1 \normof{x} + c_2$, then we have.
    \begin{align*}
        \Delta_{P,P_S}(v_t) \leq \Eof[Y \sim \rho_t^S]{\Wrm_2(\delta_Y P, \delta_Y P_S)^2}^{\frac1{2}}\left(\frac{c_1}{2} \normof{P}_t + \frac{c_1}{2}\normof{P_S}_t + c_2 \right),
    \end{align*}
    with $\normof{P}_t^2 := \Eof[X \sim \rho_t^S P]{\normof{X}^2}$ (resp. $P_S$) and $\Wrm_2$ the Wasserstein distance (\Cref{sec:proofs-singular-algorithms}).
\end{restatable}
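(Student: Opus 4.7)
The plan is to compare the two inner expectations in $\Delta_{P,P_S}(v_t) = \Eof[x \sim \rho_t^S]{(P_S - P)(\log v_t)(x)}$ via optimal transport. For each fixed $x$, let $\gamma_x$ be an optimal $W_2$-coupling of $\delta_x P$ and $\delta_x P_S$, and write $(Y,Y') \sim \gamma_x$. The fundamental theorem of calculus along the segment from $Y$ to $Y'$ gives
\begin{align*}
  \log v_t(Y') - \log v_t(Y) = \int_0^1 \left\langle \nabla \log v_t((1-s)Y + s Y'),\, Y' - Y \right\rangle \der s,
\end{align*}
so a Cauchy-Schwarz inside the inner product, the linear growth hypothesis, and the convexity inequality $\normof{(1-s)Y + s Y'} \leq (1-s)\normof{Y} + s\normof{Y'}$ yield the pointwise bound
\begin{align*}
  \left| \log v_t(Y') - \log v_t(Y) \right| \leq \normof{Y' - Y} \left( \tfrac{c_1}{2}\bigl(\normof{Y} + \normof{Y'}\bigr) + c_2 \right).
\end{align*}

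I would then integrate this estimate against the joint measure $\nu(\der x, \der y, \der y') := \rho_t^S(\der x)\, \gamma_x(\der y, \der y')$ on $\Rd \times \Rd \times \Rd$ and apply Cauchy-Schwarz in $L^2(\nu)$ to separate the two factors. The first factor gives $\sqrt{\Eof[x \sim \rho_t^S]{W_2(\delta_x P, \delta_x P_S)^2}}$ by the optimality of $\gamma_x$. The second factor is then controlled via Minkowski's inequality in $L^2(\nu)$, using that the marginals of $\gamma_x$ are exactly $\delta_x P$ and $\delta_x P_S$, so that the relevant second moments are $\normof{P}_t^2$ and $\normof{P_S}_t^2$; this produces the sum $\tfrac{c_1}{2}\normof{P}_t + \tfrac{c_1}{2}\normof{P_S}_t + c_2$. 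Combining the two pieces yields the stated inequality.

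The only delicate point is the two-stage bookkeeping: Cauchy-Schwarz must be applied on the joint space so that the $W_2$ distance can be read off using the optimality of $\gamma_x$ at each $x$, and the subsequent Minkowski step on the norm-growth factor is what splits the moment of $Y$ from the moment of $Y'$ to yield the asymmetric-looking sum $\normof{P}_t + \normof{P_S}_t$ rather than a single moment of $\normof{Y} + \normof{Y'}$. I do not anticipate a genuine obstacle beyond this: the linear-growth assumption on $\nabla \log v_t$ is precisely what upgrades the bounded-Lipschitz-type argument (which would only yield a $W_1$ estimate) to the $W_2$ control demanded by the statement.
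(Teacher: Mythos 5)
Your proposal is correct and follows essentially the same argument as the paper's proof: pick the optimal $W_2$-coupling $\gamma_x$ at each $x$, apply the fundamental theorem of calculus along the chord together with the linear-growth hypothesis and the convexity of the norm, then use Cauchy--Schwarz in $L^2$ of the joint measure $\rho_t^S(\der x)\gamma_x(\der y,\der y')$ and the triangle (Minkowski) inequality to split off $\normof{P}_t$ and $\normof{P_S}_t$. The ``delicate bookkeeping'' you flag is exactly how the paper handles it, so there is no gap.
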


The main feature of \Cref{prop:first-term-bound-f-regular}, proven in \Cref{sec:proofs-singular-algorithms}, is to relate the expansion term to the Wasserstein distance $\Wrm_2(\delta_Y P_S, \delta_Y P)$. Note that this term is local, \ie, is computed for every point $Y\in\Rd$ and is expected over the posterior distribution. In the case of SGD, with a relevant choice of $P$, we usually have $\Wrm_2(\delta_Y P, \delta_Y P_S)^2 \leq \eta^2 \Eof[U]{\normof{\widehat{g}_S(Y,U)}^2}$, with the notation of \Cref{ex:noisy-sgd-noisy-algorithm}. Similar Wasserstein distances between Markov kernels have been extensively studied in the context of convergence and geometry of Markov chains \citep{rudolf_perturbation_2017,ollivier_ricci_2007} and have been used by \citep{zhu_uniform--time_2023-1} to obtain stability-based bounds for SGD.
Therefore, \Cref{prop:first-term-bound-f-regular} connects our framework and the prior art through these Wasserstein terms.

\Cref{prop:first-term-bound-f-regular} is based on a condition of linear growth of $\nabla \log v_t = \nabla \log u_t^S - \nabla \log \pi$.
This assumption can be seen as an assumption on the tail of the posterior density $u_t^S$ and can hint towards good choices of prior, \ie, with tails similar to the posterior density. 
Finally, note that a related condition was used by \citet[Section A.4]{li_generalization_2020}.

\section{Applications}
\label{sec:applications}

In this section, we apply the results of \Cref{sec:poissonization-entropy-flow,sec:controlling-discrepancy} to obtain generalization bounds in concrete examples. We start by showing in \Cref{sec:sgld} that our techniques recover known rates in the case of SGLD, and then show their ability to reach novel generalization bounds.

\subsection{Sanity check: Stochastic gradient Langevin dynamics}
\label{sec:sgld}

In this subsection, we apply our framework to the SGLD algorithm, using the notations of \Cref{ex:noisy-sgd-noisy-algorithm}. According to \Cref{sec:poissonization-notations-and-convergence}, we denote the Poissonized SGLD process as $Y_t^S := X_{N_t}^S$. To apply our theory to SLGD, we rely on \Cref{thm:entropy-flow-weak-regularity} and bound the expansion term using the same proof technique as in the proof of \Cref{prop:expansion-term-general-bound-noisy}. In the particular case of SGLD, we can exploit the specific structure of diffusion processes to simplify \Cref{eq:expansion-term-noisy-bound} even further. This leads to the following bound, whose proof is postponed to \Cref{sec:proofs-sgld-bounds}.
 
\begin{restatable}[Poissonized SGLD]{theorem}{thmSGLDKLBound}
    \label{thm:sgld-kl-bound}
    Suppose that $\eta\lambda < 1$ and take $(P,\pi)$ as in \Cref{ex:ornstein-uhlenbeck-example}.
    Let $\pi$ be the invariant distribution of this process.
    Assume that \Cref{ass:compatibility-condition} hold and the loss is $\Sigma^2$-subgaussian.
    Then, with probability at least $1 - \zeta$ over $S \sim \datadist$, for all $T>0$,
    \begin{align*}
        \Eof{G_S(Y_t^S) | S} \leq \frac{2 \Sigma}{\sqrt{n}} \left( \frac{\eta^2 (2 - \lambda \eta)}{2  \sigma^2} \int_0^T e^{-\lambda \eta (T - t)}\Eof{\normof{\widehat{g}_S(Y_t^S,U)}^2 | S}  \der t + K_T + \log \frac{3}{\zeta} \right)^{1 / 2},
    \end{align*}
    with $K_T := e^{-\lambda\eta T} \klb{\mu_0}{\pi}$, for some $U\sim\mu_U$ independent of $Y_t^S$.
\end{restatable}
\Cref{thm:sgld-kl-bound} is proven in \Cref{sec:proofs-sgld-bounds}. We also prove in \Cref{lemma:compatibility-for-sgld} that \Cref{ass:compatibility-condition} is satisfied in the particular case where $\nabla \ell(\cdot, z)$ is Lipschitz, uniformly \wrt $z\in\zcal$.
Although Poissonization might not be the preferred framework to prove generalization bounds for SGLD, we observe that the orders of magnitude of the different terms in \Cref{thm:sgld-kl-bound} are the same as the results of \citet{mou_generalization_2017} obtained under similar assumptions (it should be noted that our result does not require a Lipschitz loss).
Indeed, since the Poisson process $(N_t)_{t\geq 0}$ has intensity $1$, the integral $\int_0^K$ for $K\in\N$ should be intuitively thought of as the Poissonized counter part of a sum $\sum_{k=1}^K$, where $K$ is the number of iterations.
Therefore, our framework is general enough to recover Poissonized counterparts of classical results, hence unifying the proof techniques of several generalization bounds.

\begin{remark}[Dissipativity assumption]
    \label{rq:dissopativity-assumptions}
    Our proof technique also applies to SGLD without regularization, at the cost of assuming the dossipativity of the loss, \ie, $\langle w, \nabla \ell (w,z) \rangle \geq m \normof{w^2} - b$, for some $m>0$ and $b \geq 0$. Such dissipativity assumptions are common in this literature \citep{futami_time-independent_2023,raginsky_non-convex_2017,farghly_time-independent_2021}. 
\end{remark}

\subsection{SGD with perturbed final iterate}
\label{sec:sgd-perturbed-last-iterate}

We now apply our framework to obtain novel generalization bounds for SGD. We consider the regularized SGD recursion with the notation of \Cref{ex:noisy-sgd-noisy-algorithm}, \ie, $X_{k+1}^S = (1 - \lambda \eta) X_k^S - \eta \widehat{g}_S(X_k^S,U_k)$. We adopt a technique similar to \citep{neu_information-theoretic_2021, neyshabur_exploring_2017} to obtain information-theoretic bounds for SGD, consisting in perturbing the \emph{last} SGD iterate with Gaussian noise. Hence, instead of observing $X_k^S$ directly, we consider the random variables, $\overline{X}_k^S := X_k^S + \sigma \xi_k$, with $(\xi_k)_{k\in\N} \sim \mathcal{N}(0,I_d)^{\otimes \infty}$ independent of $(X_k^S)_{k\in\N}$ and $\sigma>0$.
According to \Cref{sec:poissonization-notations-and-convergence}, we actually consider a Poissonized version of this stochastic process, 
\begin{align*}
 \Ybar_t^S := \Xbar_{N_t}^S = Y_t^S + \sigma \xi_{N_t},
\end{align*}
Our goal is to obtain a bound on $\mathds{E} \big[{G_S(\Ybar_T^S) | S} \big]$ for all time horizon $T>0$, with high probability over $S \sim \datadist$.
We show in \Cref{lemma:perturbed-poissonized-process-structure} that the operations of Poissonization and addition of Gaussian noise $(\xi_k)_{k\in\N}$ commute, \ie, $\Ybar_t^S = Y_t^S + \sigma \Xi$, with $\Xi \sim \mathcal{N}(0, I_d)$ independent of $Y_t^S$.
Thus, the (Poissonized) generalization error is written as
\begin{align*}
    G_S(Y_t^S) = G_S(\Ybar_t^S) + \delta_t^S, \quad \delta_t^S := G_S(\Ybar_t^S) -  G_S(Y_t^S),
\end{align*}
where $\delta_t^S$ is formally similar to the difference of \emph{local value sensitivities} appearing in \citep[Theorem 1]{neu_information-theoretic_2021}. It is also related to the \emph{expected sharpness} of \citet{neyshabur_exploring_2017} (to be precise, this term is linked to $\er(\Ybar_t^S) -  \er(Y_t^S)$).
We refer to these works for further analysis of these terms and focus our analysis on the perturbed Poissonized process.

The main difficulty arising is that the process $(\Xbar_k^S)_{k\in\N}$ defined above is not Markovian in general. This means that the posterior density cannot satisfy the same Boltzmann equation as \Cref{lemma:boltzmann-with-regularity}. Nevertheless, we adopt a proof technique that relies on noting that the posterior density actually satisfies a \emph{perturbed} Boltzmann equation. This observation, which is explained in \Cref{sec:proofs-sgd-perturbed-last-iterate}, allows us to exploit our framework to obtain a new generalization bound for SGD.
We make the following assumptions on the loss function.

\begin{assumption}
    \label{ass:perturbed-sgd-assumptions}
    The stochastic gradient norms are bounded by a fixed constant $L>0$, \ie, $\normof{\widehat{g}_k} < L$.
     The initialization $v_0 := \der \mu_0 / \der \pi$ has compact support included $B(0,L/\lambda)$ and there exists $r > 0$ such that $v_0$ is bounded away from $0$ on $B(0,r)$.
\end{assumption}
This assumption justifies the regularity conditions necessary for our entropy flow derivations.
The following theorem is a generalization bound for perturbed SGD.

\begin{restatable}[Generalization bound for perturbed SGD]{theorem}{thmGeneralizationPerturbedSGD}
    \label{thm:sgd-perturbed-last-iterate}
    Take $\pi := \mathcal{N}(0, \sigma^2 I_d)$.
    Assume that \Cref{ass:perturbed-sgd-assumptions} holds, that the loss $\ell(w,z)$ is $\Sigma^2$-subgaussian with respect to $z \sim \mu_z$, and that the initial distribution $\mu_0$ has its support included in the ball $B(0, L / \lambda)$. Then, we have with probability at least $1 - \zeta$ over $S\sim\datadist$ that, for all $T>0$,
    \begin{align*}
        \Eof{G_S(\bar{Y}_T^S) | S} \leq \frac{2\Sigma}{\sqrt{n } } \left\{ \int_0^T \frac{\eta L e^{-\lambda \eta(T -t)} }{\lambda \sigma^2}\sqrt{\Eof{\normof{\widehat{g}_S (Y_t^S, U)}^2 \big| S} } \der t + K_T + \log \frac{3}{\zeta} \right\}^{1 / 2},
    \end{align*}
    with $K_T := e^{-\lambda \eta T} \klb{\mu_0}{\pi}$ and $U\sim\mu_U$ is independent of $Y_t^S$.
\end{restatable}
\Cref{thm:sgd-perturbed-last-iterate} shows that the generalization of the perturbed last iterate of SGD is upper bounded by a weighted integral of the gradient norms encountered during training. Due to the presence of exponential decay, $e^{-\lambda \eta(T -t)} $, in the integral, this bound put more weight on the stochastic gradient norms toward the end of training. This observation is consistent with the popular idea that generalization is improved when the algorithm converges to flat minima \citep{jastrzebski_three_2018-1,keskar_large-batch_2017}.
\Cref{thm:sgd-perturbed-last-iterate} should be compared to \citep[Theorem 1]{neu_information-theoretic_2021}, which can be informally summarized with our notation as
\begin{align*}
    \Eof[S,U]{G_S(\Xbar_K^S)} \lesssim \frac{\Sigma}{\sqrt{n}} \left\{ \frac1{\sigma^2}  \sum_{k=1}^K \eta^2 \Eof{\normof{\widehat{g}_S(X_k^S) - \nabla \risk (X_k^S)}^2} \right\}^{1 / 2} ,
\end{align*}
where we omitted the sensitivity terms.
Our bound directly involves the gradient norm $\normof{\widehat{g}_S}^2$ instead of its distance from the population gradient. Although the latter might be smaller in some cases, it cannot be estimated from the training dataset.
Moreover, our framework naturally provides high probability bounds, while \citet{neu_information-theoretic_2021} only prove expected bounds.
Most importantly, \Cref{thm:sgd-perturbed-last-iterate} shows that, in the presence of regularization, the sum over all iterations can be replaced by an integral with an exponential decay term, which significantly improves the time dependence of the bound.
Note, however, that our theory only provides a generalization bound for the Poissonized iterates $\Ybar^S_t$, see \Cref{sec:poissonization-notations-and-convergence}.

\subsection{SGD with noise injection for strongly convex losses}
\label{sec:sgd-noise-injection}

It was suggested by several authors that noise injection inside the evaluation of the stochastic gradient can lead to improved generalization \citep{orvieto_anticorrelated_2023,liu_noisy_2021,nesterov_smooth_2005}. To the best of our knowledge, no explicit generalization bound has been derived for this algorithm. In this subsection, we show that our framework naturally leads to a generalization bound in this setting.
More precisely, we study the following regularized gradient descent algorithm with noise injection,
\begin{align*}
    X_{k+1}^S = X_k^S - \eta \nabla \er (X_k^S + \sigma \xi_k), \quad (\xi_k)_{k\in \N} \sim \mathcal{N}(0, I_d)^{\otimes n},
\end{align*}
where $\xi_k$ is independent of $X_k^S$. As before, we denote the Poissonized process by $Y_t^S := X_{N_t}^S$. Its probability distribution is denoted $\rho_t^S$ and we write $\rhobar_t^S := \rho_t^S \ast \mathcal{N}(0,\sigma^2 I_d)$.

According to \citet{orvieto_explicit_2023}, this recursion is implicitly regularized toward flat minima. Indeed, by Itô's lemma, we see that if $\er$ is twice differentiable, we have
\begin{align*}
    \Eof{\er(x + \sigma \xi_k)|S} = \er(x) + \frac1{2}\int_0^{\sigma^2} \Eof{\Delta \er (x + \sqrt{u} \xi_k)| S} \der u \approx \er(x) + \frac{\sigma^2}{2} \Delta \er(x).
\end{align*}
Therefore, the above recusion is analogous to using the trace of the Hessian matrix of $\er$ as a regularizer, hence pushing the dynamics toward flatter regions, which are commonly associated with better generalization error \citep{jastrzebski_three_2018-1,keskar_large-batch_2017}.

Thanks to our framework, we easily obtain a generalization bound for this algorithm in the case of a strongly convex loss function.

\begin{restatable}{theorem}{thmSGDNoiseInjection}
    \label{thm:noise-injection-strongly-convex}
    Assume that $\ell(\cdot, z) \in \mathcal{C}^2(\Rd)$ and is $\gamma$-strongly convex (with $\gamma \eta < 1$) and gradient-Lipschitz for all $z \in \zcal$ and that $\ell(w, z)$ is $\Sigma^2$-subgaussian with respect to $z \sim \mu_z$. Then, we have with probability at least $1 - \zeta$ over $S\sim\datadist$ that
    \begin{align*}
        \Eof{G_S(Y_t^S) | S} \leq \frac{2 \Sigma}{\sqrt{n}} \left\{ \int_0^T e^{-\eta \gamma (T - t)} \Eof[\rhobar_t^S]{2\Delta \el + \frac1{\sigma^2} \normof{\nabla \el}^2} \der t +  K_T + \log \frac{3}{\zeta} \right\}^{1 / 2},
    \end{align*}
    where $\el(y) = \gamma^{-1}\er(y) -  \normof{y}^2 / 2$, $K_T = e^{-\gamma \eta T} \klb{\mu_0}{\pi}$, and $\pi = \mathcal{N}(0, \sigma^2 \gamma \eta / (2 - \gamma \eta) I_d)$.
\end{restatable}
To our knowledge, \Cref{thm:noise-injection-strongly-convex} is the first generalization bound for this algorithm, which confirms the regularization effect of noise injection.
The assumption that $\nabla \ell (\cdot,z)$ is Lipschitz-continuous is only meant to ensure that \Cref{ass:compatibility-condition} holds in this case. In particular, the value of this smoothness constant does not appear in the final statement.
We note that our generalization bound is expressed in terms of two quantities, the gradient norms and the Laplacian of the empirical loss, expected over the perturbed Poissonized process $Y_t^S + \sigma \Xi$, with $\Xi \sim \mathcal{N}(0,I_d)$. Compared to the results of \Cref{sec:sgd-perturbed-last-iterate}, this shows that noise injection connects the generalization error to the curvature of the loss landscape.

\subsection{Toward generalization bounds for SGD under linear growth assumption}
\label{sec:sgd-linear-growth}

We consider the regularized SGD recursion as defined in \Cref{sec:sgd-perturbed-last-iterate} (with the notation of \Cref{ex:noisy-sgd-noisy-algorithm}), $X_{k+1}^S := (1 - \lambda \eta) X_k^S - \eta \widehat{g}_S(X_k^S, U_k)$.
We denote $\rho_t^S$ the distribution of the Poissonized process $Y_t^S$ and, when defined, its density with respect to the Lebesgue measure is denoted $u_t^S$.
The following result is a consequence of \Cref{prop:first-term-bound-f-regular} and \Cref{thm:entropy-flow-weak-regularity}.

\begin{restatable}{theorem}{thmSGDWithWasserstein}
    \label{thm:sgd-application-wasserstein}
    Assume that $\ell(w,z)$ is $\Sigma^2$-subgaussian and that \Cref{ass:compatibility-condition} is satisfied with $\pi := \mathcal{N}(0, \sigma^2 I_d)$, for some $\sigma>0$. For simplicity, set $\mu_0 = \pi$ and assume that there exist constants $\sigma, a > 0$ such that for all $t \in [0, T]$, we have $\normof{\nabla \log u_t^S (x) - \sigma x} \leq a \normof{x} + b$.
    Then, we have, with probability at least $1 - \zeta$ over $S \sim \datadist$, that for all $T>0$,
    \begin{align*}
        \Eof{G_S(Y_T^S) | S} \leq \frac{2\Sigma}{\sqrt{n}} \left\{\int_{0}^T e^{-\lambda \eta (T - t) } \mathrm{N}_S(t)  \left( a \mathrm{N}_S(t) + a \mathrm{M}_S(t)  + b \right) \der t + \log \frac{3}{\zeta} \right\}^{1 / 2},
    \end{align*}
    with $\mathrm{N}_S(t)^2 := \eta^2{\EofLigne{\normofLigne{\widehat{g}_S(Y_t^S, U)}^2 | S}}  + 2\sigma^2 {\lambda\eta d}$ and $\mathrm{M}_S(t)^2 := {\EofLigne{\normofLigne{Y_t^S}^2 | S}}$.
\end{restatable}
This theorem illustrates the flexibility of our framework, which allows to obtain many generalization bounds for both noisy algorithms (\Cref{sec:sgld,sec:sgd-perturbed-last-iterate}) and non-noisy algorithms such as SGD under specific assumptions, while relying on the same unified proof technique.
 We see that \Cref{thm:sgd-application-wasserstein} relates the generalization error of SGD to the stochastic gradient norms, averaged over the posterior distribution. Similar quantities appear classically in the study of noisy SGD \citep{mou_generalization_2017,negrea_information-theoretic_2020,haghifam_sharpened_2020,dupuis_generalization_2024} and were already involved for non-noisy SGD \citep{neu_information-theoretic_2021}. Compared to \citep{neu_information-theoretic_2021}, the main advantage of \Cref{thm:sgd-application-wasserstein} is the presence of the exponential decay term, $e^{-\lambda \eta (T - t)}$.
The assumption that $\normof{\nabla \log u_t^S (x) - \sigma x} \leq a \normof{x} + b$ is meant to ensure the linear growth condition of \Cref{prop:first-term-bound-f-regular}, which is the main ingredient in the proof of \Cref{thm:sgd-application-wasserstein}. 
Intuitively, this assumption implies that the SGD iterations do not converge too fast to singular distributions, which we believe might be related to the edge of stability phenomenon
\citep{cohen2021gradient}. 
This shows that the regularity of the distributions that are induced by the algorithm impact its generalization properties.

\section{Conclusion and Future Work}

In this paper, we introduced a new framework to extend the entropy flow method to general time-homogeneous Markov algorithms. 
Drawing inspiration from the Poissonization operation, we derive an exact entropy flow formula for Poissonized Markov algorithms which we connected to modified logarithmic Sobolev inequalities, hence, relating the generalization error of general Markov algorithms to ergodic theory.
We provided generic tools to upper bound the expansion term that appears in our theory, both for noisy and non-noisy algorithms.
Thanks to this new framework, we easily recovered (Poissonized versions of) classical generalization bounds for SGLD and obtained new generalization bounds for several algorithms, such as SGD and variants of SGD with noise injection.

\paragraph{Future work.} 
Several directions remain to be explored with our new entropy flow method.
As mentioned in the introduction, the entropy flow is a popular method to analyze the differential privacy of Langevin Monte Carlo algorithms or differentially-private SGD with Gaussian noise. 
Therefore, it would be an interesting direction for future work to apply our framework to obtain new differential privacy guarantees.
Moreover, modified LSIs are a popular tool to analyze the convergence properties of discrete Markov chains.
Thus, one potential direction for future research is to apply our work to understand the generalization error of Markov algorithms in discrete parameter spaces.

\acks{U.\c{S}. is partially supported by the French government under the management of
Agence Nationale de la Recherche as part of the ``Investissements d'avenir'' program, reference
ANR-19-P3IA-0001 (PRAIRIE 3IA Institute). B.D., M.H., and U.\c{S}. are partially supported by the European Research Council Starting Grant
DYNASTY – 101039676.}

\appendix

\section{Omitted proofs of \Cref{sec:poissonization-entropy-flow}}
\label{sec:proofs-poissonization-entropy-flow}

Before, presenting the proofs of \Cref{sec:poissonization-entropy-flow}, we give more details on the construction of the dual operator in \Cref{eq:l1-adjoint-construction}.
We observe that $(P^\star)^k = (P^k)^\star$, which we will denote $P^{\star k}$, without ampiguity.
Note that the operator $P^\star$, which we will call the dual operator, is neither Markovian nor sub-Markovian in general, and it differs from the dual operators constructed by \citet{del_moral_contraction_2003}. In particular, it does not satisfy Jensen's inequality. Nevertheless, the operator $P^\star$ satisfies the following duality formula,
\begin{align}
    \label{eq:duality-formula}
    \int f P g \der \nu = \int g P^\star f \der \nu,
\end{align}
which is true as soon as $f \in \Lrm^1(\nu)$ and $g P^\star f \in \Lrm^1(\nu)$, by basic properties of Markov kernels.

\subsection{Proof of \Cref{lemma:boltzmann-with-regularity} and additional technical lemmas}
\label{sec:boltzmann-and-regularity-proof}

\begin{proof} \textbf{(of \Cref{lemma:boltzmann-with-regularity})}
    Let $v_0 := \der \mu_0 / \der \pi$.
    For any Borel set $A \in \borel$, we have
    \begin{align*}
        \Pof{X^S_{N_t} \in A}&= \Pof{\bigcup_{n \in \N} \setof{X_k^S \in A, N_t = k}} \\
        &= \sum_{k \in \N} e^{-t} \frac{t^k}{k!} \mu P_S^k (A) \by{disjoint union and independence}.
    \end{align*}
    By definition of the dual operator and Tonelli's theorem, we have
    \begin{align*}
        \Pof{X^S_{N_t} \in A} = e^{-t} \sum_{k\in\N} \frac{t^k}{k!} \int_A P_S^{\star k} v_0 \der \pi = \int_A \bigg( e^{-t} \sum_{k\in\N} \frac{t^k}{k!}  P_S^{\star k} v_0 \bigg)\der \pi.
     \end{align*}
     Therefore, we have, by definition of the Radon-Nykodym derivative,
     \begin{align}
    \label{eq:poissonized-density-expression}
    \boxed{
    v_t(x) := \frac{\der \rho_t^S}{\der \pi} (x) = e^{-t} \sum_{k\in\N} \frac{t^k}{k!} P_S^{\star k} v_0(x) = e^{-t} \sum_{k\in\N} \frac{t^k}{k!} \frac{\der \law (X_k^S)}{\der \pi} (x).
    }
\end{align}
     By the same reasoning and using \Cref{eq:duality-formula}, we show that $P_S^\star v_t(x) = e^{-t} \sum_{k\in\N} \frac{t^k}{k!} P_S^{\star (k+1)} v_0(x).$
     Let $b > a > 0$ and $t\in[a,b]$, for $k \in \N^\star$ we have $|\partial_t (t^k P_S^{\star k} v_0 / k!)| \leq b^{k-1} P_S^{\star k} v_0 / (k-1)!$ . Therefore, as $P_S^\star k v_b \in \Lrm^1(\pi)$, we can differentiate under the sum to obtain
     \begin{align*}
         \frac{\partial v_t}{\partial t} = -v_t + e^{-t} \sum_{k\geq1} \frac{t^{k-1}}{(k-1)!} P_S^{\star k} v_0(x) = -v_t + P_S^\star v_t.
     \end{align*}
     This completes the proof.
\end{proof}
\textbf{Poissonized semigroup}
A formally similar computation provides the expression of the Poissonized Markov semigroup $(P_S)_t$ of $Y_t^S$, for any bounded Borel-measurable function $f$,
\begin{align*}
    (P_S)_t f (x) := \Eof{f(Y_t^S) | Y_0^S = x} = e^{-t} \sum_{k\in\N} \frac{t^k}{k!} P_S^k (x).
\end{align*}
Thus, $((P_S)_t)_{t\geq 0}$ forms a continuous-time semigroup. From this formula and the Boltzmann equation, we easily see that the infinitesimal generator of this semigroup is given by $L_S := P_S - I$, so it can be formally written as $(P_S)_t = e^{t (P_S - I)}.$
See \citep{diaconis_logarithmic_1996} for more details and \citep{bakry_analysis_2014} for an introduction to Markov semigroups.

We conclude this subsection by showing that, under our assumptions, the generalization error is almost-surely integrable with respect to the Poissonized posterior distribution.

\begin{lemma}
    \label{lemma:integrability_of_G_S}
    Assume that \Cref{ass:compatibility-condition} holds and that $\ell(w, z)$ is $\Sigma^2$-subgaussian. Then, for $\datadist$-almost all $S$, we have $G_S \in \Lrm^1(\rho_t^S)$ for all $t\geq 0$. Moreover, \Cref{eq:poissonized-generalization-formula} holds. 
    In particular, the map $t \mapsto \int G_S \der \rho_t^S$ is slmost-surely continuous on $\R_+$.
\end{lemma}

\begin{proof}
    Let us fix $t\geq 0$ and $S\in\zcal^n$. By Donsker-Varadhan's formula, we have
    \begin{align*}
       \int {\min(N, |G_S|)} \der \rho_t^S \leq \klb{\rho_t^S}{\pi} + \log \int e^{\min(N, |G_S|)} \der \pi.
    \end{align*}
    By the monotone convergence theorem, we deduce the following inequality in $\R\cup\{+\infty\}$,
    \begin{align*}
         \int {|G_S|} \der \rho_t^S \leq \klb{\rho_t^S}{\pi} + \log \int e^{ |G_S|} \der \pi.
    \end{align*}
    Now, by Tonelli's theorem and the subgaussian condition, we have $\int e^{ |G_S|} \der (\datadist\otimes\pi) < +\infty.$
    Therefore, the term $\int e^{ |G_S|} \der \pi$ is $\datadist$-almost-surely finite. As shown in the proof of \Cref{thm:entropy-flow-weak-regularity}, \Cref{ass:compatibility-condition} ensures that $\klb{\rho_t^S}{\pi}$ is finite for all $S \in \zcal^n$.
    Then, Tonelli's theorem ensures that, for $\datadist$-almost all $S$, we have that $\sum_{k\in\N} {t^k} \Eof{|G_S(X_k^S)| ~|S} / k! < \infty$. By Fubini's theorem, \Cref{eq:poissonized-generalization-formula} therefore holds for $\datadist$-almost all $S$. 
    The desired continuity property then follows from standard power series arguments.
    This concludes the proof.
\end{proof}

\subsection{Proof of \Cref{thm:poissonization-invariant-measure-convergence}}
\label{sec:proof-depoissonization-tv-wasserstein}

\begin{proof} \textbf{(of \Cref{thm:poissonization-invariant-measure-convergence}).}
    \textbf{First part of the statement.} Let us fix $S\in\zcal^n$ such that the assumed convergences almost surely hold ($\tv(\mu_0 P_S^k,\mu^S) \longrightarrow 0$).
    For any $A\in\borel$,
    \begin{align*}
        \rho_t^S(A) = e^{-t} \sum_{k\in\N} \frac{t^k}{k!} (\mu_0 P_S^k)(A).
    \end{align*}
    As $\mu_0 P_S^k$ converges to $\mu^S$ in total variation, for all $\varepsilon> 0$ there exists $K \in \mathds{N}$, such that for all $k\geq K$ and all $A \in \borel$,  $|\mu_k^S(A) - \mu^S(A)| \leq \varepsilon$. Therefore, we have
    \begin{align*}
    |\rho_t^S(A) - \mu^S(A)| \leq \varepsilon + 2 e^{-t} \sum_{0 \leq k \leq K - 1} \frac{t^k}{k!}, \quad A \in \borel,
    \end{align*}
    where the last term is smaller than $\varepsilon$ for all $t$ greater than some $t_0(K)$, depending only on $K$. This shows that $\tv(\rho_t^S, \mu_S) \to 0$, hence, by the triangle inequality, we get
    \begin{align*}
        \Eof{|G_S(X_k^S) - G_S(Y_k^S)| \big\vert S} 
        \leq 2 \normof{\ell}_\infty (\tv(\mu_0 P_S^k,\mu^S) + \tv(\mu^S, \rho_k^S)) \underset{k\to\infty}{\longrightarrow}  0 .
    \end{align*}
    Now we assume that there exists $C_S>0$ and $a_S \in (0,1)$ such that $\forall k \in \N,~ \tv (\mu_0 P_S^k,\mu^S) \leq C_S a_S^k$, then we have, for any $A\in\borel$:
    \begin{align*}
        |\rho_t^S(A) - \mu^S(A)| \leq e^{-t} \sum_{k\in\N} \frac{t^k}{k!}  |\mu_k^S(A) - \mu^S(A)| \leq C_S e^{-t} \sum_{k\in\N} \frac{(a_S t)^k}{k!} = C_S e^{-(1 - a_S) t}.
    \end{align*}
    Thus, by the triangle inequality for total variation, we get, for all $k \in \N$,
    \begin{align*}
        \tv(\mu_0 P_S^k, \rho_k^S) \leq \tv(\mu_0 P_S^k, \mu^S) + \tv(\rho_k^S, \mu^S) \leq C_Sa_S^k + C_Se^{-(1 - a_S) k} \leq 2C_Se^{-(1 - a_S) k} ,
    \end{align*}
    where we used that $a \leq e^{-(1 - a)}$, hence, $\Eof{|G_S(X_k) - G_S(Y_k)| \big\vert S} \leq 4 \normof{\ell}_\infty C_Se^{-(1 - a_S) k}$.

    \textbf{Second part of the statement.} We now assume that $\ell$ is $L$-Lipschitz continuous and use Wasserstein distance instead of $\tv$. We sketch the proof as it is similar to the previous case. The main argument, is that by convexity of the Wasserstein distance $\Wrm_1$ (see \citep[Lemma 2.3]{farghly_time-independent_2021} and \citep[Theorem 4.8]{villani_optimal_2009}), we have
    \begin{align}
    \label{eq:wasserstein-convexity-poissonization}
    \Wrm_1(\rho_t^S, \mu_S) =  \Wrm_1 \big(e^{-t} \sum_{k\in\N} \frac{t^k}{k!} \mu_0 P_S^k, e^{-t} \sum_{k\in\N} \frac{t^k}{k!} \mu^S \big)
    \leq e^{-t} \sum_{k\in\N} \frac{t^k}{k!} \Wrm_1(\mu_k^S, \mu_S).
    \end{align}
Then, if we assume that $\Wrm_1(\mu_0 P_S^k, \mu^S) \longrightarrow 0$ and fix $\varepsilon > 0$, we know that there exists $K\in\N$ such that $\forall k \geq K,~ \Wrm_1(\mu_0 P_S^k, \mu^S) \leq \varepsilon$ and we obtain that $\Wrm_1(\rho_t^S, \mu_S) \longrightarrow 0$ by noting that
\begin{align*}
    \Wrm_1(\rho_t^S, \mu_S) \leq \varepsilon + e^{-t} \big(\max_{0\leq k \leq K}  \Wrm_1(\mu_0 P_S^k, \mu^S) \big) \sum_{0 \leq k \leq K - 1} \frac{t^k}{k!} \longrightarrow 0.
\end{align*}
We conclude by Kantorovith duality \citep[Theorem 5.10]{villani_optimal_2009} and the triangle inequality.

Finally, if $\Wrm_1(\rho_t^S, \mu_S) \leq C_S a_S^k$ with $C_S>0$ and $a_S \in (0,1)$, \Cref{eq:wasserstein-convexity-poissonization} gives $\Wrm_1(\rho_t^S, \mu_S) \leq C_Se^{-(1 - a_S)t}$. We conclude again by duality and the triangle inequality.
\end{proof}

\subsection{The case of continuous Markov operators}
\label{sec:proof-continuous-operator-case}

\begin{restatable}{lemma}{lemmaPSContinuousOperator}
    \label{lemma:continuous-operator-case}
    Assume that condition \ref{ass:continuity-preservation} in \Cref{ass:compatibility-condition} is satisfied. Then condition
    \Cref{ass:bounded-entropy} is satisfied if $P_S$ defines a continuous operator $\Lrm^2(\pi) \to \Lrm^2(\pi)$.
\end{restatable}

\begin{proof}
    As $\Lrm^2(\pi)$ is a Hilbert space, we can apply the Riesz theorem to define an adjoint operator $P_S^\star : \Lrm^2(\pi) \to \Lrm^2 (\pi)$, which is also continuous. Let $\normof{P_S^\star}$ denote its operator norm. We can verify that, in this case, $P_S^\star|_{\Lrm^1(\pi)}$ corresponds to the dual operator defined above. 

    By continuity of $P_S^\star$ and the triangle inequality, we have that $v_t \in \Lrm^2(\pi)$, indeed,
    \begin{align*}
    \normof{v_t}_{\Lrm^2(\pi)} \leq  e^{-t} \sum_{k\in\N} \frac{t^k}{k!} \normof{P_S^{\star k} v_0}_{\Lrm^2(\pi)} \leq e^{-t} \sum_{k\in\N} \frac{t^k}{k!} \normof{P_S^\star}^k \normof{ v_0}_{\Lrm^2(\pi)} \leq e^{\normof{P_S^\star}} \normof{ v_0}_{\Lrm^2(\pi)} < \infty.
    \end{align*}
    By continuity, we also have $P_S^\star v_t \in \Lrm^2(\pi)$.
    Moreover, the map $x\mapsto \Phi(x) = x \log(x)$ is lower bounded, and we have $\Phi(P_S^\star v_t) \leq (P_S^\star v_t)^2 \in \Lrm^2(\pi) \subset \Lrm^1(\pi)$. This completes the proof. 
\end{proof}

\subsection{Proof of the entropy flow formula}
\label{sec:entropy-flow-derivation}

Recall that $\Phi(x) := x \log(x)$ ($\Phi(0) = 0$). Throughout the proofs we use without explicit mention the elementary fact that if $0\leq a \leq b$, then $|\Phi(a)| \leq (1/e) + |\Phi(b)|$. Moreover, we can show, by Jensen's inequality that $\Phi (a + b) \leq \Phi(a) + \Phi(b) + (a+b) \log(2)$, for any $a,b \in \R_+$. This justifies some of the domination arguments below.

\begin{proof} \textbf{(of \Cref{thm:entropy-flow-weak-regularity})}
    Let $0 < a < b$ and $t\in[a,b]$. 
    Note that $\Phi$ is lower 
    bounded and
    \begin{align*}
        v_t = e^{-t} \sum_{k \in \N} \frac{t^k}{k!} P_S^{\star k} v_0 \leq e^{-a} v_0 + e^{b - a} e^{-b} \sum_{k\geq 1} \frac{b^k}{k!} P_S^{\star k} v_0 \leq e^{-a} v_0 + b e^b P_S^\star v_b.
    \end{align*}
    Therefore, we have $\Phi(v_t) \in \Lrm^1(\pi)$, by \Cref{ass:compatibility-condition}. 
    Moreover, we have $\partial_t \Phi (v_t) = \Phi'(v_t) (P_S^\star - I)v_t$. 
    We note that $x\Phi'(x) = x + \Phi(x)$ and $\Phi$ is increasing on $[1, +\infty)$. Thus,
    \begin{align*}
        |v_t \Phi'(v_t)| \leq e^{b - a} v_b + \frac1{e}  + |\Phi(e^{b - a} v_b)| \leq \frac1{e} + e^{b - a} v_b + |\Phi(e^{b - a})| v_b + e^{b-a} |\Phi(v_b)| \in \Lrm^1(\pi).
    \end{align*}
    Thus, $v_t \Phi'(v_t)$ is uniformly $\pi$-integrable for $t \in [a,b]$, \ie, $\sup_{a \leq t \leq b} |v_t \Phi'(v_t)| \in \Lrm^1(\pi).$   
    
    For the other term, we have by convexity that $\Phi' (v_t) P_S^\star v_t \leq \Phi(P_S^\star v_t) - \Phi(v_t) + v_t \Phi'(v_t).$
    We have that $P_S^\star v_t \leq e^{b - a} P_S^\star v_b$. Moreover, as $\Phi'$ is increasing, we have $\Phi'(v_t)P_S^\star v_t \geq P_S^\star v_t \Phi'(e^{-a} v_0)$. We assumed that $v_0$ is bounded away from $0$, therefore we conclude by using the previous reasoning that $\Phi' (v_t) P_S^\star v_t$ is also uniformly $\pi$-integrable for $t \in [a,b]$.

    This ensures that we can differentiate under the integral, which gives, by \Cref{lemma:boltzmann-with-regularity},
    \begin{align*}
        \timeder \entphi{\pi}{v_t} = \timeder \int \Phi(v_t) \der \pi &= \int \Phi'(v_t) \left( P_S^\star - I\right) v_t \der \pi.
    \end{align*}
    The reasoning above also shows that $\Phi'(v_t) P_S^\star v_t \in \Lrm^1(\pi)$. This ensures that the duality formula, \Cref{eq:duality-formula} holds, and we have
    \begin{align*}
         \timeder \entphi{\pi}{v_t} &= \int (P_S - I) \Phi'(v_t)  v_t \der \pi.
    \end{align*}
    Now, let $t \in [a,b]$ as before and let $x,y \in \Rd$. By convexity, we have the elementary inequality $v (\log (v) - \log(u)) - (v - u) \geq 0$, thus,
    \begin{align*}
    v_t(x) \Phi'(e^{-a} v_0(y)) \leq v_t(x) \Phi'(v_t (y)) \leq \Phi(v_t(x)) + v_t(y),
    \end{align*}
    By invariance of $P$ under $\pi$ and the fact that $v_0$ is bounded away from $0$, this ensures that $(x,y) \mapsto v_t(x) \Phi'(v_t (y)) \in \Lrm^1(\pi\otimes P)$. Therefore, $v_t P (\Phi'\circ v_t) \in \Lrm^1(\pi)$ and we have
    \begin{align*}
         \timeder \entphi{\pi}{v_t} &= \int (P_S - P) \Phi'(v_t)  v_t \der \pi - \int v_t (I - P) \Phi'(v_t) \der \pi \\
          &= \int (P_S - P) \Phi'(v_t)  v_t \der \pi - \ecal_{\pi, P} (\Phi' \circ v_t , v_t), 
    \end{align*}
    where the last line follows from the definition of $\ecal_{\pi, P}$. This concludes the proof.
\end{proof}

\begin{lemma}
    \label{lemma:continuity-of-entropy}
    Under \Cref{ass:compatibility-condition}, the map $t \mapsto \int \Phi(v_t) \der \pi$ is continuous on $\R_+$. 
\end{lemma}

\begin{proof}
    For all $x \in \Rd$, the continuity of $t \mapsto v_t(x)$ on $\R_+^\star$. comes from the differentiability of $v_t$, obtained as part of the proof of the Boltzmann equation in \Cref{lemma:boltzmann-with-regularity}.    
    We obtained in the proof of \Cref{lemma:boltzmann-with-regularity} that $\Phi(v_t)$ is uniformly integrable for $t \in [a,b]$ for any $b > a \geq 0$. 
    By the dominated convergence theorem, we deduce the desired continuity property.
\end{proof}

\subsection{Rigorous justification of \Cref{eq:pac-bayes-application-in-proof}}
\label{sec:rigorous-pac-bayes -application}

To justify \Cref{eq:pac-bayes-application-in-proof}, we apply \Cref{thm:subgaussian-pac-bayes} with $\mathscr{P}_S := \setof{\rho_T^S,~T \in \Q_{\geq 0}}$. 
By \Cref{ass:compatibility-condition}, \Cref{lemma:integrability_of_G_S}, and the Markov kernel property of $\rho_t^S$, we can show (up to an almost sure equal modification of $G_S$) that the maps $S \mapsto \int G_S \der \rho_t^S$ and $S\mapsto \klb{\rho_t^S}{\pi}$ are $\fcal^{\otimes n}$-measurable.
As $\mathscr{P}_S$ is parameterized by a countable family, we can apply \Cref{lemma:continuity-of-entropy} to argue that the maps $t \mapsto \int G_S \der \rho_t^S$ and $t\mapsto \klb{\rho_t^S}{\pi}$ are continuous on $\R_+$. This leads to \Cref{eq:pac-bayes-application-in-proof}.

In the rest of the paper, we rely on this justification to directly apply \Cref{thm:subgaussian-pac-bayes} in the form of \Cref{eq:pac-bayes-application-in-proof}.
Note that these measure-theoretic complications can be easily avoided in the case where the data space is assumed to be countable.

\subsection{Omitted proofs of \Cref{sec:poisson-sobolev}}
\label{sec:proofs-poisson-sobolev-section}

We first prove that the Dirichlet form can always be defined under our assumptions.

\begin{lemma}
    \label{lemma:dirichlet-makes-sense}
    Let $\Phi(x) := x \log(x)$ ($\Phi(0) := 0$) and let $f$ be a non-negative function such that $\Phi \circ f \in \Lrm^1(\pi)$. Then $\ecal_{\pi,P}(\log f, f)$ always makes sense in $\R\cup\setof{+\infty}$ and $\ecal_{\pi,P}(\log f, f) \geq 0$.
\end{lemma}

\begin{proof}
    Note that
    \begin{align*}
      \ecal_{\pi,P}(\log f, f) = \iint  f(x) [ \log(f(x))  - \log(f(y)) ] \der (\pi \otimes P).
    \end{align*}
    let $x,y \in \Rd$, by the convexity of $\Phi$, we have, $ f(x) [ \log(f(x))  - \log(f(y)) ]  - (f(x) - f(y)) \geq 0$, where the inequality is in $\R\cup\setof{+\infty}$.
    Let $\psi(x,y) := - (f(x) - f(y)) $ and $\phi := f(x) [ \log(f(x))  - \log(f(y)) ]$. As $\Phi(f) \in \Lrm^1(\pi)$, we have that $f \in \Lrm^1(\pi)$, and also $Pf \in \Lrm^1(\pi)$, by invariance of $\pi$. Thus, $\psi \in \Lrm^1(\pi \otimes P)$. This implies that the negative part $\phi^-$ of $\phi$ satisfies $\phi^- \in \Lrm^1(\pi \otimes P)$. Thus, $\ecal_{\pi,P}(\log f, f)$ always makes sense in $\R\cup\setof{+\infty}$. Moreover, the inequality also implies the non-negativity of $\ecal_{\pi,P}(\log f, f)$, by invariance of $\pi$. This concludes the proof.
\end{proof}

We now present the proof of \Cref{lemma:lsi-transfer-general-lemma}.

\begin{proof} \textbf{(of \Cref{lemma:lsi-transfer-general-lemma}).}
    Let $f$ be such that $\Phi \circ f \in \Lrm^1(\pi)$.   
    By reversibility of $\pi$, we have
    \begin{align*}
        \ecal_{\pi,P} (\Phi'\circ f, f) = \int f (I - P) (\Phi'\circ f) \der \pi 
        = \int \left( f \Phi'(f) - \Phi'(f) Pf  \right) \der  \pi.
    \end{align*}
    By convexity of $\Phi$, we have for all $x \in \Rd$ that
    \begin{align*}
        \Phi (Pf(x)) - \Phi(f(x)) - \Phi'(f(x)) (Pf(x) - f(x)) \geq 0.
    \end{align*}
    Therefore, $f \Phi'(f) - \Phi' (f) Pf \geq \Phi (f) - \Phi (Pf) $. By Jensen's inequality and the invariance of $\pi$, we have $\Phi \circ Pf \in \Lrm^1(\pi)$. As $\Phi \circ f \in \Lrm^1(\pi)$, we also have $f \in \Lrm^1(\pi)$.
    Thus,
    \begin{align*}
        \ecal_{\pi,P} (\Phi'\circ f, f) \geq \entphi{\pi}{f} - \entphi{\pi}{Pf}.
    \end{align*}
    Now we note that for the entropy functional is homogeneous, in the sense that for any $\lambda > 0$, we have $\ent{\lambda f} = \lambda \ent{f}$. Let us define the a probability measure $\mu$ by $\der \mu := f \der \pi / \pi(f)$, with $\pi(f) := \int f \der \pi$. By reversibility of $\pi$ under $P$, we have that $\der (\mu P) / \der \pi  = P (\der \mu / \der \pi)$, $\pi$-almost surely.
    Therefore, by definition of the contraction coefficient, we have
    \begin{align*}
        \ecal_{\pi,P} (\Phi' \circ f, f) \geq \pi(f) \left( \klb{\mu}{\pi}  - \klb{\mu P}{\pi} \right) &\geq \pi(f) \delta_{\Phi}(\pi,P) \klb{\mu}{\pi}  \\&
        = \delta_{\Phi}(\pi,P) \entphi{\pi}{f}.
    \end{align*}
    This concludes the proof.
\end{proof}

We conclude this section by formally prove the statements of \Cref{ex:ornstein-uhlenbeck-example}.

\begin{proof} (of \Cref{ex:ornstein-uhlenbeck-example})
    Consider the Ornstein-Uhlenbeck process, $\der Z_t = -\frac1{\sigma_\pi^2} Z_t \der t + \sqrt{2} \der B_t$,
    where $B$ is a standard Brownian motion. Let $(P_t)_{t\geq 0}$ be the associated semigroup. Define $c := 1 / \sigma_\pi^2$ and $ t_0 := -\sigma_\pi^2 \log (1 - \gamma)$.
    By the Mehler's formula for Ornstein-Uhlenbeck processes, we have 
    that the Markov operator $P$ of the process $(X_k)_{k \geq 1}$ is equal to $P_{t_0}$. Indeed, we have that
    $Z_{t_0}$ is distributed according to $\mathcal{N}(e^{-ct_0} Z_0, ((1 - e^{-2ct_0}) / c) I_d) = \mathcal{N}((1 - \gamma) Z_0, \sigma^2 I_d)$ (see \citep{chafai_logarithmic_2017}), conditioned on $Z_0$. Therefore, the reversibility of $P$ then follows from the reversibility of the invariant measure for the semigroup $(P_t)_{t\geq 0}$. Moreover, $\pi$ satisfies the (classical) logarithmic Sobolev inequality with constant $1 / \sigma_\pi^2$. As a consequence, we have the exponential convergence of the entropy \citep{bakry_analysis_2014}, $\ent{P_t f} \leq e^{-2 t} \ent{f}$. By \Cref{lemma:lsi-transfer-general-lemma} we have $\ecal_{\pi,P}(\log f, f) \geq \delta(\pi, P)\ent{f} $, and
    $$
        \delta(\pi, P)\ent{f} = \ent{f} - \ent{P_{t_0} f} \geq \left( 1 -  e^{-2 t_0 / \sigma_\pi^2 } \right)\ent{f} = (2\gamma - \gamma^2) \ent{f}.
    $$
    This concludes the proof.
\end{proof}

\section{Omitted proofs of \Cref{sec:controlling-discrepancy}}
\label{sec:proofs-controlling-discrepancy}

\subsection{Omitted Proofs of \Cref{prop:expansion-term-general-bound-noisy}}
\label{sec:proofs-noisy-algorithms}

\begin{proof} \textbf{(of \Cref{prop:expansion-term-general-bound-noisy})}
    By \Cref{ass:compatibility-condition} and \Cref{eq:poissonized-density-expression}, $v_t$ is bounded away from $0$. Moreover, by the proof of \Cref{thm:entropy-flow-weak-regularity}, $v_t P\log v_t \in \Lrm^1(\pi)$. This implies that $\log v_t \in \Lrm^1(\delta_x P)$ for $\rho_t^S$-almsot all $x \in \Rd$. Let's fix such an $x$.
    By Donsker-Varadhan's formula,
    \begin{align*}
        P_S (\log v_t) (x) \leq \frac1{q} \klb{\delta_x P_S}{\delta_x P} + \frac1{q} \log P (v_t^q)(x) \by{for all $q>0$}.
    \end{align*}
    Therefore, we introduce the function $ \Lambda (q, \cdot, t) :=  \frac1{q} \log P (v_t^q) - P(\log v_t)$,
    where the variable $x \in \Rd$ has been omitted for clarity.
    Then we define $\Irm(q,x,t) := q^{-1} \Lambda (q,x,t)$.
    
    \textbf{Case 1:} Let $q > 0$ and first assume that $v_t^q \in \Lrm^1(\delta_x P)$. Then, for all $\epsilon \in [0, q)$, we have $\partial_\epsilon (v_t^\epsilon) = \log(v_t) v_t^\epsilon$.
    By \Cref{ass:compatibility-condition} ($v_0$ is bounded away from $0$) and the proof of \Cref{lemma:boltzmann-with-regularity}, we know that $v_t$ is lower bounded by some constant $C_t$. Therefore, we have $\partial_\epsilon (v_t^\epsilon) \geq - |\log C_t| (1 + v_t^q)\in \Lrm^1(\delta_x P)$, where we have noted that $v_t^\epsilon \leq 1 + v_t^q$. We also have
    \begin{align*}
        \partial_\epsilon (v_t^\epsilon) = \frac1{q - \epsilon} \log (v_t^{q - \epsilon}) v_t^\epsilon \leq \frac{v_t^q}{q - \epsilon}  \in \Lrm^1(\delta_x P).
    \end{align*}
    By taking $\epsilon$ in a small open interval included in $[0,q)$, this justfies that we can differentiate $\epsilon \mapsto \log P (v_t^\epsilon)$ under $P$. We easily see that $\partial_\epsilon \log P (v_t^\epsilon)|_{\epsilon=0} = P(\log v_t)$.
    Therefore, for all $x\in\Rd$ and all $t> 0$, $\Lambda (\epsilon,x,t) \to 0$ as $\epsilon \to 0$.
    It also justifies that
    \begin{align*}
        \frac{\partial}{\partial \epsilon} \Lambda (\epsilon,\cdot,t) &= \frac{-1}{\epsilon^2} \log P(v_t^\epsilon) + \frac1{\epsilon} \frac{P(v_t^\epsilon \log v_t)}{P(v_t^\epsilon)} \\
        &= \frac{\entphi{P}{v_t^\epsilon}}{\epsilon^2 P(v_t^\epsilon)} \\
        &= \frac1{\epsilon^2} \entphibar{P}{v_t^\epsilon}.
    \end{align*}
    We obtain similarly $\Lambda (q, \cdot, t) \sim_{q \to 0} \frac{q}{2} \mathrm{Var}_{P}(\log v_t)$, justifying the last part of the statement.
    
    \textbf{Case 2:} Now assume that $P(v_t^{q})(x) = +\infty$. By Fatou's lemma, $\liminf_{\epsilon \to q} \log P (v_t^\epsilon) = +\infty$.
    Therefore, the inside of the infimum in the right-hand side of \Cref{eq:expansion-term-noisy-bound} becomes infinite, so the bound holds trivially.
\end{proof}

\subsection{Omitted proofs of \Cref{prop:first-term-bound-f-regular}}
\label{sec:proofs-singular-algorithms}

Before stating the proof of \Cref{prop:first-term-bound-f-regular}, let's recall the definition of Wasserstein distance.

\begin{definition}[Wasserstein distance]
    \label{def:wasserstein}
    let $\mu, \nu \in \probameasures$ with second-order moments and denote by $\Gamma(\mu,\nu)$ the set of \emph{couplings} between $\mu$ and $\nu$. The Wasserstein's distance $\Wrm_2$ is
    $$
        \Wrm_2(\mu,\nu)^2 := \inf_{\gamma \in \Gamma(\mu,\nu)} \left\{ \iint \normof{x - y}^2 \der \gamma(x,y)  \right\}.
   $$
\end{definition}

The proof of \Cref{prop:first-term-bound-f-regular} is inspired by \citep[Proposition 1]{polyanskiy_wasserstein_2016}. 

\begin{proof} \textbf{(of \Cref{prop:first-term-bound-f-regular})}
Let $\gamma_x \in \Gamma (\delta_x P, \delta_x P_S)$ be an optimal coupling for $\Wrm_2 (\delta_x P, \delta_x P_S)$ and $(U,V) \sim \gamma_x$. We denote by $\rho_t^S \otimes \gamma_x$ the joint distribution of $x \sim \rho_t^S$ and $(U,V) \sim \gamma_x$.
By the linear growth condition, we have almost surely that
\begin{align*}
    \log v_t(U) - \log v_t (V) &= \int_0^1 \langle \nabla \log v_t (s U + (1 - s)V), U - V \rangle \der s \\
    &\leq \normof{U - V} \left( \frac{c_1}{2} \normof{U} + \frac{c_1}{2} \normof{V} + c_2 \right). \by{Cauchy-Schwarz's inequality}
\end{align*}
By integrating over $\rho_t^S \otimes \gamma_x$ and using Cauchy-Schwarz and triangle inequalities we obtain
\begin{align*}
    \Delta_{P,P_S}(v_t) &= \mathds{E}_{Y \sim \rho_t^S}\Eof[(U,V) \sim \gamma_Y]{\log v_t(U) - \log v_t(V)} \\
                    &\leq \Eof[Y \sim \rho_t^S]{\Wrm_2 (\delta_Y P, \delta_Y P_S)^2}^{\frac1{2}}\normof{\frac{c_1}{2} \normof{U} + \frac{c_1}{2} \normof{V} + c_2}_{\Lrm^2(\rho_t^S \otimes \gamma_Y)} \\
                    &\leq \Eof[Y \sim \rho_t^S]{\Wrm_2 (\delta_Y P, \delta_Y P_S)^2}^{\frac1{2}}\left(\frac{c_1}{2} \normof{P}_t + \frac{c_1}{2}\normof{P_S}_t + c_2 \right).
\end{align*}
This concludes the proof.
\end{proof}

\section{Omitted proofs of \Cref{sec:applications}}
\label{sec:proofs-applications}

\subsection{Omitted proofs of \Cref{sec:sgld}}
\label{sec:proofs-sgld-bounds}

Before proving \Cref{thm:sgld-kl-bound}, we establish a more general result in \Cref{prop:sgld-exponential-decay}.
We first define a class of Markov kernel $P$ that generalizes \Cref{ex:ornstein-uhlenbeck-example}.

\begin{definition}
    \label{def:representable-kernels}
    Consider a twice differentiable gradient-Lipschitz potential $V:\Rd \to \R$ such that $e^{-V} \in \Lrm^1(\Rd)$, $e^{-V}$ has finite moments of order $2$, and the SDE $\der Z_t = -\nabla V (Z_t) \der t + \sqrt{2} \der B_t$,
    where $(B_t)_{t\geq 0}$ is a standard Brownian motion. We say that a Markov kernel $P$ is represented by this equation at time $t_0 > 0$ if $P(x,.) = \mathrm{Law}(Z_{t_0} | Z_0 = x)$ for all $x \in \Rd$. 
\end{definition}

\begin{proposition}
    \label{prop:sgld-exponential-decay}
    Assume that $P$ is representable in the sense of \Cref{def:representable-kernels}. We denote $(P_t)_{t\geq 0}$ the semigroup of the SDE appearing in \Cref{def:representable-kernels} and $\pi$ its invariant distribution.
    Assume further that $\nabla^2 V \succeq K I_d$ ($K>0$) and that \Cref{ass:compatibility-condition} holds. 
    \begin{align*}
        \klb{\rho_T^S}{\pi} \leq \frac1{q} \int_0^T e^{-\frac{T - \tau}{\tau_0}}\Eof[x\sim\rho_\tau^S]{\klb{\delta_x P_S}{\delta_x P}}  \der \tau + e^{-T / \tau_0} \klb{\mu_0}{\pi}.
    \end{align*}
    with the constants $q$ and $\tau_0$ given by
    \begin{align*}
        q =  \frac{1 - e^{-Kt_0} }{1 - e^{-2Kt_0}}, \quad   \frac1{\tau_0} = 1 - e^{-Kt_0}.
    \end{align*}
\end{proposition}

\begin{proof}
    We denote $\Phi(x) := x\log (x)$ ($\Phi(0) = 0$).
    The proof technique is a refinement of the proof of \Cref{prop:expansion-term-general-bound-noisy}, exploiting the structure of the underlying diffusion process. Let us denote by $\pi$ the reversible (and invariant) distribution of $(P_t)_{t\geq 0}$ and by $\lcal$ its infinitesimal generator. As before, we denote $L = P - I$ ($L$ is the generator of the Poissonized semigroup while $\lcal$ is the generator of the semigroup $(P_t)_{t\geq 0}$).
    By \Cref{thm:entropy-flow-weak-regularity}, we have that
    \begin{align*}
        \frac{\der}{\der \tau} \klb{\rho_\tau^S}{\pi} = \Eof[\pi]{v_\tau(P_S - P) \log v_{\tau}} - \ecal_\pi (\log v_{\tau}, v_{\tau}),
    \end{align*}
    where we used $\tau$ as the time variable to avoid later confusion with $(P_t)_{t\geq 0}$.

    By \Cref{ass:compatibility-condition}, we know that $v_t \log P v_t$ is bounded away from $0$. The invariance of $\pi$ and the Donsker-Varadhan's formula ensure that $\rho_t^S (\log v_t) \leq \klb{\rho_t^S}{\pi}$, so that $v_t \log P v_t \in \Lrm^1(\pi)$. By Jensen's inequality, we also have $v_t \log P v_t^q \in \Lrm^1(\pi)$.
    
     By the Donsker-Varhadan formula, we have, for all $\tau > 0$, $S \in \zcal^n$, $q \in (0,1]$ and $x \in \Rd$,
    \begin{align}
        \label{eq:sgld-decay-step-1}
        P_S \log v_\tau (x) = \frac1{q} P_S \log \left(  v_\tau^q\right)(x) \leq \frac1{q} \klb{\delta_x P_S}{\delta_x P} + \frac1{q} \log \left( P (v_\tau^q)(x) \right).
    \end{align}   
    By Hölder's inequality, $v_\tau^q \in \Lrm^1(\delta_x P)$ for $\pi$-almost all $x \in \Rd$.
    Consider a differentiable function $q : [t_0/2, t_0] \to (0,1]$, (determined later).
    Let $f : \Rd \to \R$ be a $\mathcal{C}^2$ positive function s.t. $\forall x \in \Rd, ~ \log f \in \Lrm^1(\delta_x P)\cap \Lrm^1(\delta_x P_S)$, $f\log f \in \Lrm^1(\delta_x P)$ and $f$ is bounded away from $0$. We first assume that $f$ has bounded derivatives of order $0$, $1$, and $2$. Let
    \begin{align*}
        \alpha (t) := \frac1{q(t)} \log P_t \left( f^{q(t)} \right).
    \end{align*}
    Let us denote $g_t := P_t \left( f^{q(t)} \right)$, by the chain rule we have:
    \begin{align*}
        \alpha'(t) = - \frac{q'(t)}{q(t)^2} \log g_t + \frac1{q(t)} \frac{\lcal g_t}{g_t} + \frac{q'(t)}{q(t)} \frac{P_t \left( f^{q(t)} \log(f) \right)}{g_t}.
    \end{align*}
    Let us denote $\Gamma$ the carré du champ operator, \ie, in our case $\Gamma(\psi) := \normof{\nabla \psi}^2 $, see \citep{bakry_analysis_2014}. By the diffusion property (see \citep[Section 5.4]{chafai_logarithmic_2017}), we have
    \begin{align*}
        \alpha'(t) &= - \frac{q'(t)}{q(t)^2} \log g_t + \frac1{q(t)} \bigg( \lcal \log g_t + \frac{\Gamma(g_t)}{g_t^2} \bigg) + \frac{q'(t)}{q(t)} \frac{P_t \left( f^{q(t)} \log(f) \right)}{g_t} \\
        &= \frac1{q(t)} \left( \lcal \log g_t + \frac{\Gamma(g_t)}{g_t^2} \right) + \frac{q'(t)}{q(t)^2} \frac{\ent[P_t]{f^{q(t)}}}{g_t}.
    \end{align*}
    Now assume that $\forall t\geq 0, ~ q'(t) \leq 0$. By \citep[Lemma 5.6]{chafai_logarithmic_2017}, $\nabla^2 V \succeq K I_d$ implies that the semigroup $(P_t)_{t\geq 0}$ satisfying the $\cd(K,\infty)$ condition. Thus, by the reverse local logarithmic Sobolev inequality \citep[Theorem 5.5.2 (v)]{bakry_analysis_2014}, we have
    \begin{align*}
        \alpha'(t) \leq \frac1{q(t)} \left( \lcal \log g_t + \frac{\Gamma(g_t)}{g_t^2} \right) + \frac{q'(t)}{q(t)^2} \frac{e^{2Kt} - 1}{2K} \frac{\Gamma(g_t)}{g_t^2}.
    \end{align*}
    Based on this inequality, we choose the function $q$ on $t \in [t_0 / 2, t_0]$ to be (recall that $P = P_{t_0}$),
    \begin{align*}
       q(t) := \exp \left\{ - \int_{\frac{t_0}{2}}^{t} \frac{2K}{e^{2Ku} - 1} \der u \right\}.
    \end{align*}
    This leads to the differential inequality, for all $t_0 / 2 \leq t \leq t_0$, $\alpha'(t) \leq \frac1{q(t)} \lcal \log(g_t) = \lcal \alpha(t)$.
    By the chain rule, we can now write, for $t_0 / 2 \leq s \leq t_0$, that
    \begin{align*}
        \frac{\der}{\der s} \left( P_{t_0 - s} \alpha(s) \right) = - \lcal P_{t_0 - s} \alpha(s) + P_{t_0 - s} \alpha'(s) \leq - \lcal P_{t_0 - s} \alpha(s) + P_{t_0 - s} \lcal \alpha(s)  = 0,
    \end{align*}
    where we used that the semigroup $(P_t)_{t\geq 0}$ commutes with its infinitesimal generator $\lcal$. Therefore, the map $s \to P_{t_0 - s} \alpha(s) $ is decreasing. In particular, by using $q(t_0/2) = 1$ and the interpolation condition $P = P_{t_0}$ we have
    \begin{align}
        \label{eq:sgld-proof-crucial-step}
       \frac1{q(t_0)} \log P \left( f^{q(t_0)} \right) =  \alpha (t_0) \leq P_{\frac{t_0}{2}} \alpha \left( \frac{t_0}{2} \right) = P_{\frac{t_0}{2}} \log P_{\frac{t_0}{2}} f.
    \end{align}
    So far we assumed that $f$ has bounded derivatives of order $0$, $1$, and $2$.
    By \Cref{ass:compatibility-condition} and the construction of $v_\tau$, we know that $v_\tau \geq c_\tau$ for some $c_\tau > 0$.
    As $v_\tau \in \Lrm^1(\delta_x P)$, using classical density arguments (see \citep{bogachev_measure_2007} for instance), we can approximate $v_\tau$ in $\Lrm^1(\delta_x P)$ by a sequence of functions $(c_\tau + \varphi_N)_{N \in \N}$, with $\varphi_N \in \mathcal{C}_c^\infty (\Rd)$ and $\varphi_N \geq 0$. As convergence in $\Lrm^1(\delta_x P)$ implies convergence in probability for $\delta_x P$, up to extracting a subsequence, we can assume that $c_\tau + \varphi_N \to v_\tau$, $\delta_x P$-almost surely. Then, we can apply \Cref{eq:sgld-proof-crucial-step} to $c_\tau + \varphi_N$ and apply Fatou's lemma to get
    \begin{align*}
        \frac1{q(t_0)} \log P \left( v_\tau^{q(t_0)} \right) \leq \liminf_{N\to \infty} P_{\frac{t_0}{2}} \log P_{\frac{t_0}{2}} (c_\tau + \varphi_N).
    \end{align*}
    By the triangle inequality and elementary computations, we note that (recall that $P = P_{t_0}$),
    \begin{align*}
       \left| P_{\frac{t_0}{2}} \log P_{\frac{t_0}{2}} (c_\tau + \varphi_N) - P_{\frac{t_0}{2}} \log P_{\frac{t_0}{2}} (v_\tau) \right| 
       \leq \frac1{c_\tau} P (|c_\tau +\varphi_N - v_\tau|) \underset{N\to\infty}{\longrightarrow} 0,
    \end{align*}
    from which we obtain \Cref{eq:sgld-proof-crucial-step} for $f = v_\tau$.
    
    Thus, by reversibility of $\pi$ under the semigroup $(P_t)_{t> 0}$, we have
    \begin{align*}
       \frac1{q(t_0)} \Eof[\pi]{v_\tau\log P \left( v_\tau^{q(t_0)} \right)}  \leq  \Eof[\pi]{P_{\frac{t_0}{2}} v_\tau \log P_{\frac{t_0}{2}} v_\tau}.
    \end{align*}
    We now plug this into \Cref{eq:sgld-decay-step-1} and get that
    \begin{align*}
        \frac{\der}{\der \tau} \klb{\rho_\tau^S}{\pi} \leq \frac1{q(t_0)} \Eof[\rho_\tau^S]{\klb{\delta_x P_S}{\delta_x P}} &+ \Eof[\pi]{ P_{\frac{t_0}{2}} v_\tau\log P_{\frac{t_0}{2}} v_\tau - v_\tau P \log(v_\tau)} \\&+ 
        \Eof[\pi]{v_\tau P \log(v_\tau) - v_\tau \log v_\tau},
    \end{align*}
    which by invariance of $\pi$ under $P_{t_0/2}$ leads to
    \begin{align*}
        \frac{\der}{\der \tau} \klb{\rho_\tau^S}{\pi} &\leq \frac1{q(t_0)} \Eof[\rho_\tau^S]{\klb{\delta_x P_S}{\delta_x P}} + \Eof[\pi]{P_{\frac{t_0}{2}} v_\tau \log P_{\frac{t_0}{2}} v_\tau - P_{\frac{t_0}{2}} \left( v_\tau \log(v_\tau) \right)} \\
        &= \frac1{q(t_0)} \Eof[\rho_\tau^S]{\klb{\delta_x P_S}{\delta_x P}} - \left( \ent{v_\tau} - \ent{P_{\frac{t_0}{2}} v_\tau} \right) . 
    \end{align*}
    By the curvature condition, the invariant distribution $\pi$ satisfies the (classical) logarithmic Sobolev inequality with constant $K$. Thus, by the exponential convergence of entropy along the semigroup \citep{bakry_analysis_2014}, we have
    \begin{align*}
         \frac{\der}{\der \tau} \klb{\rho_\tau^S}{\pi} \leq \frac1{q(t_0)} \Eof[\rho_\tau^S]{\klb{\delta_x P_S}{\delta_x P}} - \left( 1 - e^{-K t_0} \right) \klb{\rho_\tau^S}{\pi}.
    \end{align*}
    We conclude by the Grönwall lemma and noting that $\exp \big( \int_{\frac{t_0}{2}}^{t_0} \frac{-2K}{e^{2Ku} - 1} \der u \big) = \frac{1 - e^{-Kt_0}}{1 - e^{-2Kt_0}}$.
\end{proof}

We can now prove \Cref{thm:sgld-kl-bound} as a corollary of the above proposition.

\begin{proof} \textbf{(\Cref{thm:sgld-kl-bound})}
    We apply \Cref{prop:sgld-exponential-decay} by introducing the same SDE and notations as in the proof of \Cref{ex:ornstein-uhlenbeck-example}.
    With these notations, we have
    \begin{align*}
        K := \frac1{\sigma_\pi^2} = \frac{1 - (1 - \lambda \eta)^2}{\sigma^2}, \quad t_0 = -\sigma_\pi^2 \log (1 - \lambda \eta), \quad \frac{1 - e^{-Kt_0}}{1 - e^{-2Kt_0}} = \frac1{2 - \lambda \eta}.
    \end{align*}
    By the joint convexity of the KL divergence, we have the classical bound on local KL divergence at $x \in \Rd$, (see in particular \citep{neu_information-theoretic_2021}). Let $\Xi \sim \mathcal{N}(0,I_d)$ and $U$ be independent of $\Xi$ representing the randomness of the stochastic gradients, as in \Cref{ex:noisy-sgd-noisy-algorithm}.
    \begin{align*}
        \klb{\delta_x P_S}{\delta_x P} &= \klb{\law \left( (1 - \lambda \eta)x - \eta \widehat{g}_S(x, U) + \sigma \Xi \right)}{\law \left( (1 - \lambda \eta)x + \sigma \Xi \right)} \\
        &\leq \frac1{2\sigma^2} \Wrm_2 \left( \law \left( \eta \widehat{g}_S(x, U)\right), \delta_0 \right)^2 \\
        &\leq  \frac{\eta^2}{2\sigma^2} \Eof{\normof{\widehat{g}_S(x,U)}}^2.
    \end{align*}
    Therefore, we obtain that
    \begin{align*}
        \klb{\rho_T^S}{\pi} \leq \frac{\eta^2 (2 - \lambda \eta)}{2  \sigma^2} \int_0^T e^{-\lambda \eta (T - t)}\EofLigne{\normof{\widehat{g}_S(Y_t^S,U)}^2 | S}  \der t.
    \end{align*}
    We conclude the proof by applying \Cref{thm:subgaussian-pac-bayes} as in the proof of \Cref{thm:generalization-under-modified-lsi}.
\end{proof}

The following technical lemma shows that \Cref{ass:compatibility-condition} is satisfied in the setting of \Cref{thm:sgld-kl-bound} under a gradient Lipschitz condition.

\begin{lemma}
    \label{lemma:compatibility-for-sgld}
    Consider the setting of \Cref{sec:sgld} and assume that the stochastic gradients $\widehat{g}_S(\cdot, U)$ are Lipschitz continuous for all $U$ and $S \in \zcal$, with the notations of \Cref{ex:noisy-sgd-noisy-algorithm}.
    We also assume that $\mu_0$ has finite second-order moment.
    Then \Cref{ass:compatibility-condition} is satisfied.
\end{lemma}

\begin{proof}
    Let us first recall some notation. $P_S$ is the Markov operator of the process $X_{k+1}^S = (1 - \lambda \eta) X_k^S - \eta \widehat{g}_S(X_k^S, U_k) + \sigma \xi_k$, with $(\xi_k)_{k \in \N} \sim \mathcal{N}(0, I_d)^{\otimes \infty}$ independent of the $(U_k)_{k \in \N}$. The prior distribution is $\pi := \mathcal{N}(0, \sigma_\pi^2 I_d)$, with $\sigma_\pi^2 := \sigma^2 / (1 - (1 - \lambda \eta)^2) > \sigma^2$.

    \textbf{Step 1:} The continuity preservation condition (\Cref{ass:continuity-preservation}) is clearly satisfied by properties of the Gaussian convolution.

    \textbf{Step 2:} It remains to check \Cref{ass:bounded-entropy}, for a fixed $S\in\zcal^n$. For this we note that $\Phi(x):=x\log(x)$ is lower bounded and $\int \Phi(P_S^\star v_t) \der \pi = \klb{\rho_t^S P_S}{\pi}$, by definition of $P_S^\star$. Moreover, we have $\sigma_\pi^2 > \sigma^2$, so we can write $\rho_t^S P_S = \mu_t \ast \mathcal{N}(0,\sigma^2 I_d)$ and $\pi = \mathcal{N}(0, (\sigma_\pi^2 - \sigma^2)I_d) \ast \mathcal{N}(0,\sigma^2 I_d)$, with $\mu_t$ the law of $Z_t := (1 - \lambda \eta) Y_t^S - \eta \widehat{g}_S(Y_t^S, U_k)$. Therefore, by the joint convexity of the KL divergence, we have
    $$
        \int \Phi(P_S^\star v_t) \der \pi \leq \frac1{2\sigma^2} \Wrm_2 (\mu_t, \mathcal{N}(0, (\sigma_\pi^2 - \sigma^2)I_d))^2 \lesssim \frac1{\sigma^2} \left(\Eof{\normof{Z_t^2}} + d (\sigma_\pi^2 - \sigma^2) \right).
    $$
    Let us write $T(\cdot, U) := (1 - \lambda \eta) \mathrm{Id} - \eta \widehat{g}_S(\cdot, U)$. By our assumption, $T(\cdot, U) : \Rd \to \Rd$ is Lipschitz continuous, let $L$ be its Lipschitz contstant. Note that $U$ represents the randomness of the batch indices, so in particular its distribution has finite support. Therefore, we have $\Eof{\normof{T(0,U_k)}^2} \leq C < +\infty $, for some $C > 0$.
    Therefore, we have
    $$
        \EofLigne{\normof{Z_t}^2} \lesssim C_0 + \EofLigne{\normof{Y_t^S}^2} = C_0 + e^{-t} \sum_{k\in \N} \frac{t^k}{k!} \EofLigne{\normof{X_k^S}^2}.
    $$
    Finally, we have $\EofLigne{\normofLigne{X_{k+1}^S}^2} \lesssim d \sigma^2 + C_0 + L^2 \EofLigne{\normofLigne{X_{k}^S}^2}$. As $\int \normofLigne{x}^2 \der\mu_0(x) < \infty$, the growth of $k\mapsto\EofLigne{\normof{X_{k}^S}^2}$ is at most exponential, so the above sum is finite. This concludes the proof.
\end{proof}

\subsection{Omitted proofs of \Cref{sec:sgd-perturbed-last-iterate}}
\label{sec:proofs-sgd-perturbed-last-iterate}

The structure of the perturbed Poissonized process is made precise by the following lemma.
\begin{restatable}{lemma}{lemmaPerturbedPoissonStructure}
\label{lemma:perturbed-poissonized-process-structure}
 $Y_t^S $ and $\xi_{N_t}$ are independent and $\xi_{N_t} \sim \mathcal{N}(0,I_d)$.
\end{restatable}
\begin{proof}
Let $A,B \in \borel$, by the definition of $(\xi_k)_{k\in\N}$, we have, for a fixed $S \in \zcal^n$,
\begin{align*}
    \Pof{Y_t^S \in A, ~ \xi_{N_t} \in B} &= \sum_{k\in\N} e^{-t} \frac{t^k}{k!} \Pof{X_k^S \in A, ~ \xi_{k} \in B} \\
    &= \sum_{k\in\N} e^{-t} \frac{t^k}{k!} \Pof{X_k^S \in A} \Pof{\xi_{0} \in B} \by{independence} .
\end{align*}
Moreover, for all $A \in \borel$, we have $ \Pof{\xi_{N_t} \in A} = \sum_{k=0}^{+\infty} e^{-t} \frac{t^k}{k!}  \Pof{\xi_{k} \in A} =  \Pof{\xi_{0} \in A}$.
This concludes the proof.
\end{proof}
\textbf{Operator setup.} Before presenting the proof of the main results of \Cref{sec:sgd-perturbed-last-iterate}, we define below some Markov operators associated with the stochastic processes defined in \Cref{sec:sgd-perturbed-last-iterate}.
\begin{itemize}[noitemsep,nosep]
    \item $P_S$ the Markov operator associated to the SGD recursion defined in \Cref{sec:sgd-perturbed-last-iterate}.
    \item $Q$ is the heat kernel defined by $Qf(x) := \Eof[\xi \sim \mathcal{N}(0,I_d)]{f(x + \sigma \xi)}$ (when it makes sense).
    \item Let $\Pbar$ be the operator associated with the discrete Ornstein-Uhlenbeck process,
    \begin{align*}
        X_{k+1} = (1 - \lambda \eta) X_k + \sigmabar \xi_k, \quad (\xi_k)_{k\in\N} \sim \mathcal{N}(0,I_d)^{\otimes \infty}, \quad \sigmabar := \sigma \sqrt{1 - (1 - \lambda \eta)^2}.
    \end{align*}
    \item We use the invariant distribution $\pi := \mathcal{N}(0, \sigma^2 I_d)$ of $\Pbar$ as prior distribution.
    \item Let $P := Q \Pbar$, $P$ is the Markov kernel of the discrete Ornstein-Uhlenbeck process,
    \begin{align*}
        X_{k+1} = (1 - \lambda \eta) X_k + \sigma \xi_k, \quad (\xi_k)_{k\in\N} \sim \mathcal{N}(0,I_d)^{\otimes \infty}.
    \end{align*}
\end{itemize}

As before, we denote by $\rho_t^S$ the distribution of $Y_t^S$ and define $v_t := \der \rho_t^S / \der \pi$.

\begin{remark}
    The fact that $P_S$ satisfies the continuity preservation condition is satisfied as soon as $\nabla\ell(\cdot, z)$ is $\beta$-Lipschitz continuous for all $z\in\zcal$ and $\eta < 1/\beta$ \citep{clerico_generalisation_2023}. 
\end{remark}

\textbf{Perturbed Boltzmann equation.} As before, define the Radon-Nykodym derivative, $\vbar_t := {\der \rhobar_t^S} / {\der \pi}$.
We derive below the perturbed Boltzmann equation for $\vbar_t$.

\begin{restatable}[Perturbed Boltzmann equation]{lemma}{lemmaPerturbedBoltzmann}
    \label{lemma:perturbed-boltzmann-equation}
    We have $\partial \vbar_t = ( Q^\star P_S^\star  - Q^\star) v_t$,
    where $Q^\star$ denotes the adjoint of $Q$ with respect to $\pi$, as constructued in \Cref{sec:assumptions-entropy-flow}.
\end{restatable}

\begin{proof}
We start by noting that $ {\der \rhobar_t^S} / {\der x} = Q \left(  {\der \rho_t^S} / {\der x} \right)$,
where $\rho_t^S := \law (Y_t^S)$ and $Qf(x) := \Eof{f(x + \sigma \Xi)}$ for $\Xi \sim \mathcal{N}(0, I_d)$ is the heat kernel with variance $\sigma^2$.
Indeed, by \Cref{lemma:perturbed-poissonized-process-structure}, we have $\rhobar_t^S = \rho_t^S \ast \mathcal{N}(0, \sigma^2 I_d)$, then the result follows from the self-adjointness of $Q$.

In particular, we have that $\rhobar_t^S = \rho_t^S Q$.
Now it is clear that $Q$ satisfies the continuity presrvation condition with respect to $\pi$, therefore, we can define its adjoint $Q^\star$ with respect to $\pi$ in \Cref{eq:l1-adjoint-construction}.
By definition, we have
\begin{align*}
    \vbar_t(x) = Q^\star v_t(x) = \frac1{\pi(x)} \int_\Rd \pi(x - y) v_t(y)  \pi(y) \der y. 
\end{align*}
This leads to the following computation.
\begin{align*}
    \partial_t \vbar_t (x) = \frac1{\pi(x)} \partial_t \int_\Rd \pi(x - y) v_t(y) \pi(y) \der y.
\end{align*}
We note that, for $t\in[a,b]$ with $a < b$, we have $|\partial_t v_t| = |(P_S^\star - I) v_t| \leq e^{b - a} \left( P_S^\star v_b + v_b \right) \in \Lrm^1(\pi)$.
As $\pi$ is bounded, we can differentiate under the integral to get that
\begin{align*}
    \partial_t \vbar_t (x) = \frac1{\pi(x)} \int_\Rd \pi (x - y) (P_S^\star - I) v_t(y) \pi(y) \der y.
\end{align*}
Therefore, we obtain the perturbed Boltzmann equation for $\vbar_t$, $\partial \vbar_t = ( Q^\star P_S^\star  - Q^\star) v_t$.
\end{proof}

This leads to the following perturbed entropy flow formula.

\begin{restatable}[Perturbed entropy flow]{corollary}{corPerturbedEntropyFlow}
    \label{cor:perturbed-entropy-flow}
    Assume that \Cref{ass:perturbed-sgd-assumptions} holds and that $Q^\star P_S^\star$ satisfies condition \ref{ass:bounded-entropy} in \Cref{ass:compatibility-condition}. Then, we have
    \begin{align*}
        \timeder \klb{\rhobar_t^S}{\pi} = \int (P_S Q - P) \log (\vbar_t) v_t \der \pi - \ecal_{\pi,\Pbar} (\log(\vbar_t), \vbar_t).
    \end{align*}
\end{restatable}

\begin{proof}
As in the proof of \Cref{thm:entropy-flow-weak-regularity}, we first verify that we can differentiate under the integral. Under our assumptions, by \Cref{lemma:sgd-stays-bounded-dissipative}, $v_t$ has compact support included in $\overline{B}(0, L/\lambda)$. This observation extends to $P_S^\star v_t$, as its support is that of the distribution $\rho_t^S P_S$. 

Let $a < b$ and $t \in [a,b]$. Observing that $\vbar_t = Q^\star v_t$ (by \Cref{lemma:perturbed-poissonized-process-structure}) and $\vbar_t \geq e^{-b} \vbar_0$, we apply \Cref{lemma:exponential-bound-density} below to show a bound of the form $|\log (\vbar_t) ( Q^\star P_S^\star  - Q^\star) v_t|(x) \lesssim \mathrm{Pol}(\normof{x}) e^{C \normof{x}}$,
with $C>0$ and $\mathrm{Pol}$ constant and polynom depending only on $(a,b,d,\sigma,\lambda,\eta,L)$. This ensures the uniform integrability of the above quantity with respect to $\pi$ on $[a,b]$. Therefore,
\begin{align*}
    \timeder \klb{\rhobar_t^S}{\pi} &= \int \log (\vbar_t) ( Q^\star P_S^\star  - Q^\star) v_t \der \pi \\
    &= \int (P_S Q - P) \log (\vbar_t) v_t \der \pi - \int \vbar_t \log(\vbar_t) \der \pi + \int v_t Q\Pbar \log(\vbar_t) \der \pi \\
    &= \int (P_S Q - P) \log (\vbar_t) v_t \der \pi - \ecal_{\pi,\Pbar} (\log(\vbar_t), \vbar_t).
\end{align*}
This concludes the proof.
\end{proof}
\begin{lemma}
    \label{lemma:exponential-bound-density}
    Under the assumptions of \Cref{thm:sgd-perturbed-last-iterate}, let $\mu \in \probameasures$ be a Borel probability measure with compact support $K \subset \Rd$. Let $Q(dx | y) := \mathcal{N}(y, \sigma^2 I_d)$ and $\pi :=  \mathcal{N}(0, \sigma^2 I_d)$. Then, there exist $a_1, a_2 > 0$, depending only on $(K, \sigma, d)$ such that $\der (\mu Q) / \der \pi (x) \leq a_1 e^{a_2 \normof{x}}.$
    
    If, moreover, $\mu \ll \pi$ and there exist $\mathcal{O} := B(0,r) \subset K$ with $r > 0$, and $\inf_{\ocal} (\der \mu / \der \pi) > 0$, then there exist $b_1, b_2 > 0$, depending only on $(r, \sigma, d, \mu)$ such that $\vbar(x) \geq b_1 e^{-b_2 \normof{x}}$.
\end{lemma}

\begin{proof}
    \textbf{First part:} Let $\vbar := \der (\mu Q) / \der \pi$. For $x \in \Rd$, we have
    \begin{align*}
        \vbar(x) = \intrd \frac{\pi(x - y)}{\pi(x)} \der \mu(y) = \int_K \exp \left\{ -\frac{1}{2 \sigma^2} \left( \normof{y}^2 - 2 \langle x, y\rangle  \right) \right\} \der \mu(y).
    \end{align*}
    Let $R>0$ such that $K \subset B(0,R)$, then we have $ \vbar(x) \leq e^{R \normof{x} / \sigma^2}$.

    \textbf{Second part:} Let $x \in \Rd$ and $v := \der \mu / \der \pi$, by a similar computation, we have
    \begin{align*}
        \vbar(x) &\geq (2\pi \sigma^2)^{-d / 2} \inf_\ocal (v) \int_\ocal \exp \left\{ -\frac{1}{\sigma^2} \left( \normof{y}^2 -  \langle x, y\rangle  \right) \right\} \der y \geq C e^{- r \normof{x}^2 / \sigma^2},
    \end{align*}
    where $C> 0$ depends only on $(d, \sigma, r)$. This concludes the proof.
\end{proof}

We now show that, under the assumptions of \Cref{thm:sgd-perturbed-last-iterate}, $v_t^S$ has compact support.

\begin{lemma}
    \label{lemma:sgd-stays-bounded-dissipative}
    Consider the SGD recursion as above and assume that \Cref{ass:perturbed-sgd-assumptions} hold. Further assume that $\normof{X_0^S} \leq L / \lambda$.
    Then, for all $S \in \zcal^n$ and all $k \in \N$, we have almost surely (on the noise of the stochastic gradient) that $\normof{X_k^S} \leq L / \lambda$.
\end{lemma}

\begin{proof}
    By the Cauchy-Schwarz and Young inequalities, we have
    \begin{align*}
        \normof{X_{k+1}^S}^2 &= (1 - \lambda \eta)^2 \normof{X_k^S}^2 + \eta^2 \normof{\widehat{g}_S(X_k^S, U_k)}^2 - 2 \eta (1 - \lambda \eta) \langle X_k^S, \widehat{g}_S(X_k^S, U_k) \rangle \\
            &\leq \left( (1 - \lambda \eta)^2 + \eta\lambda (1 - \eta\lambda) \right) \normof{X_k^S}^2 + \big((\eta^2 + \frac{\eta (1 - \eta \lambda)}{\lambda} \big) \normof{\widehat{g}_S(X_k^S, U_k)}^2 \\
            &= (1 - \lambda \eta) \normof{X_k^S}^2 + \frac{\eta L^2}{\lambda}.
    \end{align*}
    As we have $\normof{X_0^S}^2 \leq L^2 / \lambda^2$, we immediately deduce the result by recursion.
\end{proof}

\subsection{Proof of \Cref{thm:sgd-perturbed-last-iterate}}

\begin{proof} \textbf{(of \Cref{thm:sgd-perturbed-last-iterate})}
    Let $S \in \zcal^n$.
    By \Cref{cor:perturbed-entropy-flow}, we have
    \begin{align*}
        \timeder \ent[\pi]{\vbar_t} = \Delta_t - \ecal_{\pi,\Pbar} (\log(\vbar_t), \vbar_t),\quad \Delta_t := \int (P_S Q - P) \log (\vbar_t) v_t \der \pi.
    \end{align*}
    For the second term, we note that $\pi$ has been constructed as the reversible measure of the operator $\Pbar$. Therefore, we can use the modified log-Sobolev inequality for $\ecal_{\pi,\Pbar}$, which gives $\timeder \ent[\pi]{\vbar_t} \leq \Delta_t - \lambda \eta \ent[\pi]{\vbar_t}$.
    For the term $\Delta_t$, we use the proof of \Cref{prop:expansion-term-general-bound-noisy} to get
    \begin{align}
        \label{eq:delta-bound-perturbed-sgd}
        \Delta_t \leq  \int {\inf_{q>0} \left(\frac1{q} \klb{\delta_x (P_SQ)}{\delta_x P} + \int_0^q \entbar_{\delta_x P}(\vbar_t^\epsilon) \frac{\der \epsilon}{\epsilon^2}  \right)} \der \rho_t^S (x).
    \end{align}
    Observe that $P_S Q$ is the Markov operator of $X_{k+1}^S = (1 - \lambda \eta) X_k^S - \eta \widehat{g}_S(X_k, U_k) + \sigma \xi_k$.
    with the notations of \Cref{ex:noisy-sgd-noisy-algorithm}.
    By the joint convexity of the KL divergence, we obtain
    \begin{align*}
        \klb{\delta_x (P_SQ)}{\delta_x P} \leq \frac{\eta^2}{2\sigma^2} \EofLigne{\normof{\widehat{g}_S (x, U)}^2 | S},
    \end{align*}
    where $U \sim \mu_U$ is the internal randomness of $\widehat{g}_S$ and is independent of $S$.
    For the integral term, we use the (classical) logarithmic Sobolev inequality satisfied by $\delta_x P = \mathcal{N}((1 - \lambda \eta)x, \sigma^2 I_d)$,
    (note that, by convolution, $\vbar_t$ is continuously differentiable). This gives
    \begin{align*}
         \entbar_{P}(\vbar_t^\epsilon) \leq \frac{\sigma^2}{2 P(\vbar_t^\epsilon)} P \big( {\normof{\vbar_t^\epsilon}^2} / {\vbar_t^\epsilon}  \big) = \frac{\sigma^2\epsilon^2 }{2 P(\vbar_t^\epsilon)} P \left( \vbar_t^\epsilon \normof{\nabla \log \vbar_t}^2 \right),\quad \epsilon>0.
    \end{align*}
    It remains to estimate the score of the relative density $\vbar_t$, defined for all $y \in \Rd$ by
    \begin{align*}
        \nabla \log \vbar_t (y) = \nabla \log  \frac{\der \rhobar_t^S}{\der x} (y) - \nabla \log \pi(y) = \nabla \log  \frac{\der \rhobar_t^S}{\der x} (y) + \frac{y}{\sigma^2}.
    \end{align*}
    Recall that $\rhobar_t^S$ is the probability distribution of $\Ybar_t^S = Y_t^S + \sigma \xi_{N_t}$. We can apply the Fisher's identity \citep{efron2011tweedie}, we have that, for all $y\in\Rd$ and fixed $S \in \zcal^n$ and $t > 0$,
    \begin{align*}
         \nabla \log  \frac{\der \rhobar_t^S}{\der x} (y) = \nabla \log p_{\Ybar_t^S} (y) = \EofLigne{\nabla \log p_{\Ybar_t^S | Y_t^S} (y | Y_t^S) \big| \Ybar_t^S = y}.
    \end{align*}
    The conditional density of $\Ybar_t^S$ given $Y_t^S$ is given by \Cref{lemma:perturbed-poissonized-process-structure}. Thus,
    \begin{align*}
        \nabla \log \vbar_t (y) = -\frac1{\sigma^2}\EofLigne{{y - Y_t^S} \big| \Ybar_t^S = y} + \frac{y}{\sigma^2} = \frac1{\sigma^2}\EofLigne{Y_t^S \big| \Ybar_t^S = y}.
    \end{align*}
    By \Cref{lemma:sgd-stays-bounded-dissipative} below, we have that, almost surely, $\normofLigne{X_k^S} \leq L / \lambda$ for all $k \in \N$. We deduce that $\normofLigne{Y_t^S} \leq L / \lambda$. Therefore, by Jensen's inequality, we have $\normofLigne{\nabla \log \vbar_t(y)}^2 \leq \frac{L^2}{\lambda^2 \sigma^4}$.
    Thus,
    \begin{align*}
        \int_0^q \entbar_{\delta_x P}(\vbar_t^\epsilon) \frac{\der \epsilon}{\epsilon^2} \leq \frac{q L^2}{2 \sigma^2 \lambda^2}.
    \end{align*}
    We conclude by optimizing over $q$ and applying Grönwall's lemma like in \Cref{thm:generalization-under-modified-lsi}.
\end{proof}

\subsection{Omitted proofs of \Cref{thm:noise-injection-strongly-convex}}

\begin{proof} \textbf{(of \Cref{thm:noise-injection-strongly-convex})}
    Let $P$ be the Markov operator associated with the Markov chain $X_{k+1} = (1 - \eta \gamma) X_k - \eta \gamma \sigma \mathcal{N}(0, I_d)$. Its reversible distribution is $\pi$ defined above.
    By \Cref{thm:subgaussian-pac-bayes} and \Cref{prop:sgld-exponential-decay} (we postpone the verification of \Cref{ass:compatibility-condition} to the end of the proof), we have that with probability at least $1 - \zeta$ over $S\sim\datadist$, for all $T>0$,
    \begin{align*}
         \Eof{G_S(Y_t^S) | S} \leq \frac{2 \Sigma}{\sqrt{n}} \left\{ (2 - \eta \gamma)\int_0^T e^{-\gamma \eta(T - t)} \Eof[x \sim \rho_t^S]{\klb{\delta_x P_S}{\delta_x P}} + K_T + \log \frac{3}{\zeta} \right\}^{1 / 2}.
    \end{align*}
    For all $x\in\Rd$, we see that
     \begin{align*}
         \klb{\delta_x P_S}{\delta_x P} = \klb{\law (\eta \nabla \er (x + \sigma \Xi))}{\law (\eta \gamma x +  \eta \gamma \sigma \Xi)}, \quad \Xi \sim \mathcal{N}(0, I_d).
     \end{align*}
     By reparameterization of the KL divergence, we have $\klb{\delta_x P_S}{\delta_x P} = \klb{T_\#\mu}{\mu}$, where $\mu := \mathcal{N}(x, \sigma^2 I_d)$ and $T := \gamma^{-1} \nabla \er$.
     As $\er$ is strongly-convex, we have $\normofLigne{T(x) - T(y)} \geq \normofLigne{x - y}$. Thus, $T: \Rd \to T(\Rd)$ is a $\mathcal{C}^1$-diffeomorphism. Denoting by $h$ the lebesgue density of $\mu$, the change of variable formula gives that the Lebesgue density of $T_\#\mu$ is $h \circ T^{-1} |\det (\nabla T) \circ T^{-1}|^{-1}$, where $\nabla T$ is the Jacobian matrix of $T$. Therefore,
     \begin{align*}
         \klb{\delta_x P_S}{\delta_x P} &= \int \log \bigg( \frac{h(y) |\det (\nabla T (y))|^{-1}}{h (T(y))} \bigg) h(y) \der y \\
        &= \int \bigg( - \log \det (\nabla T (y))  +  \frac{\normof{x - T(y)}^2 - \normof{x - y}^2 }{2 \sigma^2}\bigg) h(y) \der y .
     \end{align*}
     By Jacobi's formula, we have that $\log \det (\nabla T (y)) = \trace \log (\nabla T (y)) \geq 0 $, where we used that $T$ is $1$-strongly convex. Thus, noting that $\nabla \el = T - \mathrm{Id}$ ,we have
     \begin{align*}
          \klb{\delta_x P_S}{\delta_x P} &\leq  \int \frac{\normof{x - T(y)}^2 - \normof{x - y}^2 }{2 \sigma^2} h(y) \der y \\
          &=  \int \frac{\normof{y - T(y)}^2 + 2 \langle T(y) - y, y - x \rangle}{2 \sigma^2} h(y) \der y  \\
          &= \frac1{2\sigma^2} \int \normof{\nabla \el}^2 \der\mu - \int \langle \nabla \el (y), \nabla \log h(y) \rangle h(y) \der y \\
          &= \frac1{2\sigma^2} \int \normof{\nabla \el}^2 \der\mu + \int \trace (\nabla^2 \el) \der \mu , 
     \end{align*}
     where the last line follows by integration by parts.
     
    \textbf{Verification of \Cref{ass:compatibility-condition}:} We note that $\int \Phi(P_S^\star v_t) \der \pi = \klb{\rho_t^S P_S}{\pi} = \klb{\rho_t^S P_S}{\pi P}$, as $\pi$ is $P$-invariant. 
    By the chain rule and joint convexity of KL divergence,
    \begin{align*}
        \klb{\rho_t^S P_S}{\pi} &= \klb{\rho_t^S}{\pi} + \Eof[\rho_t^S]{\klb{\delta_x P_S}{\delta_x P}} \\
        &\leq e^{-t} \sum_{k \in \N} \frac{t^k}{k!} \klb{\mu_0 P_S^k}{\pi P^k} + \Eof[\rho_t^S]{\klb{\delta_x P_S}{\delta_x P}}.
    \end{align*}
    Similarly, $\klb{\mu_0 P_S^k}{\pi P^k} \leq \klb{\mu_0 P_S^k}{\pi P^k} + \Eof[\mu_0 P_S^{k-1}]{\klb{\delta_x P_S}{\delta_x P}}$. By the smoothness property and the above computation, we see that $\klb{\delta_x P_S}{\delta_x P}$ is uniformly bounded and we have a polynomial (in $k$) bound on $\klb{\mu_0 P_S^k}{\pi P^k}$, which concludes the proof.
\end{proof}

\subsection{Omitted proofs of \Cref{thm:sgd-application-wasserstein}}
\label{sec:proofs-sgd-linear-growth}

\begin{proof} (of \Cref{thm:sgd-application-wasserstein})
    Let $P$ be the  Markov operator of the Markov chain $X_{k+1} = (1 - \lambda \eta) X_k + \sigmabar \mathcal{N}(0, I_d)$, with $\sigmabar := 
    \sigma \sqrt{1 - (1 - \lambda \eta)^2}$. Its invariant distribution is $\pi := \mathcal{N}(0, \sigma^2 I_d)$. By \Cref{lemma:lsi-transfer-general-lemma} and \Cref{ex:ornstein-uhlenbeck-example}, $\ecal_{\pi,P}$ satisfies the modified LSI with constant $\lambda \eta$.
    Therefore, by \Cref{thm:generalization-under-modified-lsi}, we have, with probability at least $1 - \zeta$ over $S \sim \datadist$, for all $T>0$,
    \begin{align*}
        \Eof[\rho_T^S]{G_S} \leq \frac{2\Sigma}{\sqrt{n}} \left\{ \int_{0}^T e^{-\lambda \eta (T - t) } \Eof[\rho_t^S]{(P_S - P)(\log v_t)} \der t + e^{-\gamma T} \klb{\mu}{\pi} + \log(3 / \zeta) \right\}^{1 / 2}.
    \end{align*}
    Let $v_t := \der \rho_t^S / \der \pi$.
    By assumption, we have $\normof{\nabla \log v_t} = \normof{\nabla \log u_t^S (x) - \sigma x} \leq a \normof{x} + b$.
    Therefore, by \Cref{prop:first-term-bound-f-regular}, we have that
    \begin{align*}
        \Eof[\rho_t^S]{(P_S - P)(\log v_t)} \leq \Eof[x \sim \rho_t^S]{\Wrm_2(\delta_x P, \delta_x P_S)^2}^{1 / 2} \big( \frac{a}{2} \normof{P}_t + \frac{a}{2} \normof{P_S}_t + b \big),
    \end{align*}
    with the notations of \Cref{prop:first-term-bound-f-regular}.
    Then, we compute separately,
    \begin{align*}
        &\Wrm_2(\delta_x P, \delta_x P_S)^2 \leq \eta^2 \mathds{E}_U [{\normof{\widehat{g}_S(x, U)}^2}] + \sigmabar^2 d,\quad \text{and}\\
        &\normof{P}_t^2 = \EofLigne{(1 - \lambda \eta)^2 \normof{Y_t^S}^2 | S} + \sigmabar^2 d, \quad \normof{P_S}_t^2 \leq 2\EofLigne{(1 - \lambda \eta)^2 \normof{Y_t^S}^2 + \eta^2\normof{\widehat{g}_S(x, U)}^2 | S}.
    \end{align*}
    The result follows by noting that $\sigmabar^2 \leq 2 \sigma^2 \lambda \eta$ and $(1 - \lambda \eta)^2 < 1$.
\end{proof}

\bibliography{main}

\end{document}